\def\ab{{\mathbf a}}
\def\db{{\mathbf d}}
\def\xb{{\mathbf x}}
\def\cb{{\mathbf c}}
\def\zb{{\mathbf z}}
\def\Ab{{\mathcal A}}
\def\Ib{{\mathbf I}}
\def\Pb{{\mathcal P}}
\def\Mb{{\mathcal M}}
\def\Fb{{\mathbf F}}
\def\Db{{\mathbf D}}
\def\Qb{{\mathbf Q}}
\def\Wb{{\mathbf W}}
\def\Zb{{\mathbf Z}}
\def\Ub{{\mathbf U}}
\newcommand{\argmind}[2]{\ensuremath{\underset{\substack{{#1}}}%
{\mathrm{argmin}}\;\;#2 }}
\newcommand{\minimize}[2]{\ensuremath{\underset{\substack{{#1}}}%
{\text{\rm minimize}}\;\;#2}}
\newtheorem{theorem}{Theorem}[section]
\begin{document}
%
% paper title
% Titles are generally capitalized except for words such as a, an, and, as,
% at, but, by, for, in, nor, of, on, or, the, to and up, which are usually
% not capitalized unless they are the first or last word of the title.
% Linebreaks \\ can be used within to get better formatting as desired.
% Do not put math or special symbols in the title.
\title{Deep Transform and Metric Learning Network: Wedding Deep Dictionary Learning and Neural Network}
%
%
% author names and IEEE memberships
% note positions of commas and nonbreaking spaces ( ~ ) LaTeX will not break
% a structure at a ~ so this keeps an author's name from being broken across
% two lines.
% use \thanks{} to gain access to the first footnote area
% a separate \thanks must be used for each paragraph as LaTeX2e's \thanks
% was not built to handle multiple paragraphs
%
%
%\IEEEcompsocitemizethanks is a special \thanks that produces the bulleted
% lists the Computer Society journals use for "first footnote" author
% affiliations. Use \IEEEcompsocthanksitem which works much like \item
% for each affiliation group. When not in compsoc mode,
% \IEEEcompsocitemizethanks becomes like \thanks and
% \IEEEcompsocthanksitem becomes a line break with idention. This
% facilitates dual compilation, although admittedly the differences in the
% desired content of \author between the different types of papers makes a
% one-size-fits-all approach a daunting prospect. For instance, compsoc
% journal papers have the author affiliations above the "Manuscript
% received ..."  text while in non-compsoc journals this is reversed. Sigh.

\author{Wen Tang,~\IEEEmembership{Member,~IEEE,}
        Emilie Chouzenoux$^\dagger$,~\IEEEmembership{{Senior Member,~IEEE,}}
        Jean-Christophe Pesquet$^\dagger$,~\IEEEmembership{{Fellow,~IEEE}}% <-this % stops a space
        ~and Hamid Krim,~\IEEEmembership{{Fellow, ~IEEE}}
\IEEEcompsocitemizethanks{\IEEEcompsocthanksitem W. Tang and H. Krim are with the Department
of Electrical and Computer Engineering, North Carolina State University, Raleigh, NC, 27606.\protect\\
% note need leading \protect in front of \\ to get a newline within \thanks as
% \\ is fragile and will error, could use \hfil\break instead.
E-mail: \{wtang6,~ahk\}@ncsu.edu
\IEEEcompsocthanksitem {$\dagger$ E. Chouzenoux and J. Pesquet are with Universit\'e Paris-Saclay, CentraleSup\'elec, Center for Visual Computing, Inria, 91190 Gif sur Yvette, France.}\protect\\
E-mail: \{emilie.chouzenoux,~jean-christophe.pesquet\}@centralesupelec.fr
}% <-this % stops an unwanted space
%\thanks{Manuscript received April 19, 2005; revised August 26, 2015.}}
}

\IEEEtitleabstractindextext{%
\begin{abstract}
On account of its many successes in inference tasks and imaging applications, Dictionary Learning (DL) and its related sparse optimization problems have garnered a lot of research interest.
%However, most of them only learn a single-layer dictionary. Moreover, many existing Deep DL (DDL) methods fail to keep their promises for supervised classification.
While most solutions have focused on single layer dictionaries, the improved recently proposed Deep DL (DDL) methods have also fallen short on a number of issues.
We propose herein, a novel DDL approach where each DL layer %has been recast as
can be formulated as a combination of one linear layer and a Recurrent Neural Network (RNN).
The RNN %can be flexibly built to account for a metric which is learnt from the data.
is shown to flexibly account for the layer-associated and learned metric.
%Our work brings news insights in the relationships between neural networks and DDL, while offering an efficient way for performing joint deep metric and transform learning.
Our proposed work unveils new insights into Neural Networks and DDL and provides a new, efficient and competitive approach to jointly learn a deep transform and a metric for inference applications.
%improve the discrimination and accumulate the efficiency of DDL, a novel differentiable programming method is purposed in this paper, where we transform a single-layer DL into a combination of a one-layer Feed-forward Neural Network (FNN) and a Recurrent Neural Networks (RNN), and stack them into a deep network to solve DDL. Such RNN part can be flexibly built into any network architectures and combine with a one-layer FNN to become the DL layer. To the best of our knowledge, this is the first paper to connect DDL with the combination of FNNs and RNNs, which may bring a novel insight to understand the relationship between neural networks and DDL.
%Experiments demonstrate that we achieve not only a better performance than standard DDL, but also obtain better results than the state-of-the-art for generic CNNs.
Extensive experiments {on image classification problems} are carried out to demonstrate that the proposed method can not only outperform existing DDL but also state-of-the-art generic CNNs and also achieve better robustness against adversarial perturbations.
\end{abstract}

% Note that keywords are not normally used for peerreview papers.
\begin{IEEEkeywords}
Deep Dictionary Learning, Deep Neural Network, Metric Learning, Transform Learning, Proximal operator, Differentiable Programming.
\end{IEEEkeywords}}

% make the title area
\maketitle

% To allow for easy dual compilation without having to reenter the
% abstract/keywords data, the \IEEEtitleabstractindextext text will
% not be used in maketitle, but will appear (i.e., to be "transported")
% here as \IEEEdisplaynontitleabstractindextext when the compsoc
% or transmag modes are not selected <OR> if conference mode is selected
% - because all conference papers position the abstract like regular
% papers do.
\IEEEdisplaynontitleabstractindextext
% \IEEEdisplaynontitleabstractindextext has no effect when using
% compsoc or transmag under a non-conference mode.

% For peer review papers, you can put extra information on the cover
% page as needed:
% \ifCLASSOPTIONpeerreview
% \begin{center} \bfseries EDICS Category: 3-BBND \end{center}
% \fi
%
% For peerreview papers, this IEEEtran command inserts a page break and
% creates the second title. It will be ignored for other modes.
\IEEEpeerreviewmaketitle

\IEEEraisesectionheading{\section{Introduction}\label{sec:introduction}}
% Computer Society journal (but not conference!) papers do something unusual
% with the very first section heading (almost always called "Introduction").
% They place it ABOVE the main text! IEEEtran.cls does not automatically do
% this for you, but you can achieve this effect with the provided
% \IEEEraisesectionheading{} command. Note the need to keep any \label that
% is to refer to the section immediately after \section in the above as
% \IEEEraisesectionheading puts \section within a raised box.

% The very first letter is a 2 line initial drop letter followed
% by the rest of the first word in caps (small caps for compsoc).
%
% form to use if the first word consists of a single letter:
% \IEEEPARstart{A}{demo} file is ....
%
% form to use if you need the single drop letter followed by
% normal text (unknown if ever used by the IEEE):
% \IEEEPARstart{A}{}demo file is ....
%
% Some journals put the first two words in caps:
% \IEEEPARstart{T}{his demo} file is ....
%
% Here we have the typical use of a "T" for an initial drop letter
% and "HIS" in caps to complete the first word.
\IEEEPARstart{D}{ictionary} Learning/Sparse Coding has demonstrated its high potential in exploring the semantic information embedded in high dimensional noisy data. It has been successfully applied for solving different inference tasks, such as image denoising \cite{denoising}, image restoration \cite{imagerestoration}, image super-resolution \cite{superresolution,skau2016pansharpening}, audio processing \cite{audioprocessing} and image classification \cite{imageclassification}.

%Traditional dictionary learning
%Synthesis Dictionary Learning (SDL) is the early major approach in this area. Then, Analysis Dictionary Learning (ADL)/Transform Learning, the reverse problem of SDL, received more attention.
%, because of its analysis nature and fast training and encoding.
While Synthesis Dictionary Learning (SDL) has been greatly investigated and widely used, the Analysis Dictionary Learning (ADL)/Transform Learning, as a dual problem, has been getting greater attention for its robustness property among others \cite{analysisksvd,xiao16,tang2016analysis}.
DL based methods %for image classification %usually focus
have primarily focused on learning one-layer dictionary and its associated sparse representation. Other variations on the classification theme have also been appearing with a goal of addressing some recognized limitations, such as task-driven dictionary learning \cite{mairal2009supervised}, first introduced to jointly learn the dictionary, its sparse representation, and its classification objective. In \cite{ksvd}, a label consistent term is additionally considered. Class-specific dictionary learning has been recently shown to improve the discrimination in \cite{ramirez2010classification,FDDL,wang2013max} at the expense of a higher complexity. On the ADL side, more and more efficient classifiers \cite{dadl,cadl,sksvdadl,tang2018structured,tang2019convolution} have resulted from numerous research efforts, and %in \cite{tang2019analysis}, ADL was demonstrated to achieve comparable or even better performance than SDL for both fast training and testing phases.
have yielded to an outperformance of SDL in both training and testing phases~\cite{tang2019analysis}.

%Since DL methods need to learn the dictionary and its associate representation simultaneously, thanks to sparsity promoting regularization, they raise challenging issues from a numerical standpoint. To address such kinds of challenges, a number of dictionary learning solvers have been developed, relying on K-SVD \cite{ksvd} and Fast Iterative Shrinkage-thresholding Algorithm (FISTA) \cite{fista} schemes, for instance. More precisely, a technique such as K-SVD is used to update the dictionary atoms one by one, when fixing the sparse representation, then the sparse representations is updated by an optimization algorithm such as FISTA, while fixing the dictionary.

DL methods with their associated sparse representation, present significant computational challenges addressed by different techniques, including K-SVD \cite{ksvd,analysisksvd}, SNS-ADL \cite{xiao16} and Fast Iterative Shrinkage-thresholding Algorithm (FISTA) \cite{fista}. Meant to provide a practically faster solution, the alternating minimization of FISTA still exhibited limitations and a relatively high computational cost.

%Although such alternating minimization methods provide practical solutions for DL, they still suffer from limitations and have a relatively high computational cost.

% Such alternating update of the different variables may be time consuming.
%when fixing others does not greatly reduce the time costs.
%and often hard to train the dictionaries, especially for the deep dictionary learning

%\subsection{Differentiable Programming}
%To overcome such
To address these computational and scaling difficulties, differentiable programming solutions have also been developed, to take advantage of the efficiency of neural networks. LISTA \cite{LISTA} was first proposed to unfold iterative hard-thresholding into an RNN format, thus speeding up SDL. Unlike conventional solutions for solving optimization problems, LISTA uses the forward and backward passes to simultaneously update the sparse representation and dictionary in an efficient manner. In the same spirit, sparse LSTM (SLSTM) \cite{zhou2018sc2net} adapts LISTA to a Long Short Term Memory structure to automatically learn the dimension of the sparse representation.

Although the aforementioned differentiable programming methods are efficient at solving a single-layer DL problem, the latter formulation still does not yield the best performance in image classification tasks. %In \cite{zhou2018sc2net}, the image features are still extracted from a CNN model and then feed it to SLSTM, which actually is reduce to a simple dictionary learning.
With the fast development of deep learning, Deep Dictionary Learning (DDL) methods \cite{tariyal2016deep,shahin2017image} have thus come into play. In \cite{huang0418_DDSR}, a deep model for ADL followed by a SDL is developed for image super-resolution. Also, \cite{mahdizadehaghdam2019deep} deeply stacks SDLs to classify images by achieving promising and robust results. Unsupervised DDL approaches have also been proposed, with promising results \cite{Maggu2018,Gupta}.

%However, it still meets with some difficulties in training and initialization.
However, to the best of our knowledge, %and practical experience, so far, for such kind of end-to-end deep dictionary models, there does not exist any method that can be applied in a both fast and reliable manner.
no DDL model which can provide both a fast and reliable solution has been proposed.
%can make sure it can be efficiently solved.
\begin{figure*}[!htb]
    \centering
    \includegraphics[width=0.85\textwidth]{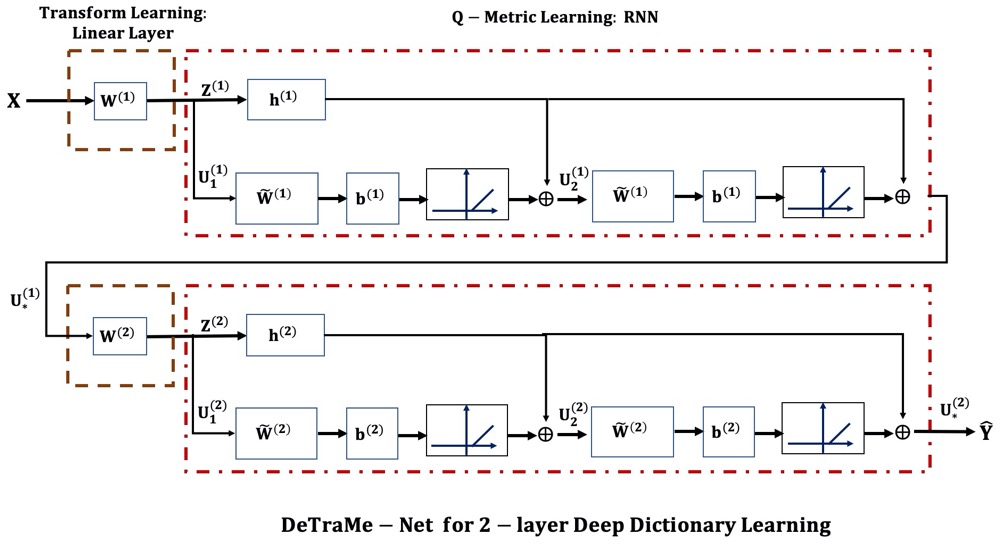}
    \caption{2-layer DeTraMe-Net model. Each layer solves a DL problem, which is transformed into the combination of Transforming Learning (\textit{i.e.}, linear layer in brown dashed lines) and Q-Metric Learning (\textit{i.e.}, RNN in red dashed dot lines). A truncated 2-iterations RNN is unfolded. Sparsity is imposed by shifted-ReLU functions. In the forward pass, we first use a linear layer to learn the new representation $\mathbf{Z^{(1)}}$ for input data $\mathbf{X}$. The RNN is then used to iteratively learn the optimal sparse representation $\mathbf{U^{(1)}_*}$. For the second layer, the sparse representation $\mathbf{U^{(1)}_*}$ is used as input to learn the second layer sparse representation $\mathbf{U^{(2)}_*}$. Finally, a cross-entropy loss based on $\mathbf{U^{(2)}_*}$ and ground truth $\mathbf{Y}$ is used. The parameters $\mathbf{W}^{(i)}$, $\widetilde{\mathbf{W}}^{(i)}$, $\mathbf{h}^{(i)}$ and $\mathbf{b}^{(i)}$, $i=1,2$, in the linear layer and RNN parts are learned by back-propagation.}
    \label{fig:framework}
\end{figure*}
%\subsection{Dictionary Learning vs. Neural Network}
The proposed work herein, aims at ensuring the discriminative ability of single-layer DL while %benefiting from
providing the efficiency of end-to-end models. To this end, we propose a novel differentiable programming method %, namely, %Joint Deep Metric and Transform Learning (DeTraMe).To further benefit from the efficiency of DDL models, the resulting structures are stacked into a deep network,leading to a so-called DeTraMe Network (DeTraMe-Net).
to jointly learn a deep metric together with an associated transform. Cascading these canonical structures will exploit and strengthen the structure learning capacity of a deep network, yielding what we refer to a Deep Transform and Metric Learning Network (DeTraMe-Net).
This newly proposed approach not only increases the discrimination capabilities of DL, but also affords a flexibility of constructing different DDL or Deep Neural Network (DNN) architectures. 
%It also allows us to overcome the roadblocks raised by initialization and gradients propagation issues arising in standard DDL methods.
As will be later shown, this approach also resolves usually arising initialization and gradient propagation issues in DDL.
%, so leading to  to make sure it solved efficiently.

As shown in Figure \ref{fig:framework}, in each layer of DeTraMe-Net, the DL problem is decomposed as a transform learning
one, \textit{i.e.} a linear layer part cascaded with a nonlinear component using a learned metric. The latter, referred to as  Q-Metric Learning, %is implemented thanks to
is realized by an RNN.
%One main contribution of our work
One of the contributions of our work is to show
how DDL can theoretically be reformulated as such a combination of linear layers and RNNs. %Making the metric matrix and the pseudo inverse of the dictionary matrix
Decoupling the metric and the dual frame operator (pseudo-inverse of dictionary) into two independent variables is also shown to introduce additional flexibility, and to improve the power of DL.  %To obtain
On the practical side, and to achieve a faster and simpler implementation, we impose a block-diagonal structure for Q-Metric Learning leading to parallel processing of independent channels. Moreover, a convolutional operator is also introduced to decrease the number of parameters,
thus leading to a Convolutional-RNN. Additionally, the Q-Metric Learning part may be viewed as a non-separable activation function that can be flexibly included into any architecture.
%, and its combination with FNN becomes to a SDL layer.
As a result, different new DeTraMe networks may be obtained by integrating Q-Metric Learning into various CNN architectures such as Plain CNN \cite{ALLCNN} and ResNets \cite{resnet}.
The resulting DeTraMe-Nets-based architectures are demonstrated to be more discriminative than generic CNN models.

Although the authors of \cite{wang2015deep} and \cite{liu2018dictionary} also used a CNN followed by an RNN for respectively solving super-resolution and sense recognition tasks, they directly used LISTA in their model. In turn, our method
actually solves the same problem as LISTA. In addition, in
\cite{wang2015deep} and \cite{liu2018dictionary},
a sparse representation was jointly learned, while a more discriminative DDL approach is achieved in our work.
%Finally, we provide a rigorous derivation of how DDL can be transformed into an alternation of linear layers and RNNs offering a theoretical basis and rationale
%ground to such kind of approaches.
%to such approaches and possibly opening an avenue for more creative and performing alternatives.
We also formally derive the linear and RNN-based layer structure from DDL, thus providing a theoretical justification and a rationale to such approaches. This may also open an avenue to new and more creative and performing alternatives. %\comment{Moreover, though} \cite{hasannasab2020parseval} %\comment{also transformed L1 norm with the proximal operator into the neural network framework, it did not separate its dictionary and the pseudo-inverse into two independent variables to learn the weighted operator for each layer.}
We recently discovered that independently, a L1 norm transformation was used in conjunction of the proximal operator into a neural network framework \cite{hasannasab2020parseval}, we note that no separation of the dictionary and the pseudo-inverse into two independent variables to learn the weighted operator as used here.

% Moreover, by introducing Q-Metric Learning brings more freedom to learn the disriminative structure of data information and to make training phase much easily than the conventional dictionary learning. To further simplify and accumulate the implementing speed of DDL, a block-diagonal stricture is used in Q-Metric, and a low-rank sub-matrix in each block is also involved to reduce the number of parameters and leads to a convolutional style, which makes Q-Metric Learning equivalent to a Convolutional-RNN.

Our main contributions are summarized below:
\begin{itemize}
\item We theoretically transform one-layer dictionary learning into transform learning and Q-Metric learning, and deduce how to convert DDL into DeTraMe-Net.
\item Such joint transform learning and Q-Metric learning are successfully and easily implemented as a tandem
%into the combination
of a linear layer and an RNN. A convolutional layer can be chosen for the linear part, and the RNN can also be simplified into a Convolutional-RNN. To the best of our knowledge, this is the first work which makes an insightful bridge between DDL methods and the combination of linear layers and RNNs, with the associated performance gains.
%This may bring a new insight of understanding neural network.
\item The transform and Q-Metric learning uses two independent variables, one for the dictionary and the other for the dual frame operator of the dictionary. This
%allows us to make the
bridges the current work
to conventional SDL while introducing more discriminative power, and allowing the use of faster learning procedures than the  original DL.
\item The Q-Metric can also be viewed as a parametric non-separable nonlinear activation function, while in current neural network architectures, very few non-separable nonlinear operators are used (softmax, max pooling, average pooling). As a component of a neural network, it can be flexibly inserted into any network architecture to easily construct a DL layer.
\item The proposed DeTraMe-Net is demonstrated to not only improve the discrimination power of DDL, but to also achieve a better performance than state-of-the-art CNNs.
\end{itemize}

The paper is organized as follows: In Section \ref{sec:related}, we introduce the required background material. We derive the theoretical basis for our novel approach in Section \ref{sec:DeTraMe-Net}.  Its algorithmic solution is investigated in Section \ref{sec:algorithm}. Substantiating experimental results and evaluations are presented in Section~\ref{sec:experiments}. Finally, we provide some concluding remarks in Section \ref{sec:conclusion}.

\subsection{Notation}
\begin{table}[!h]
    \centering
    %\small{
%    \resizebox{\columnwidth}{!}{
    \begin{tabular}{l|l}
    \hline
    %\hline
         Symbols& Descriptions  \\
    \hline
        $\mathbf{A},~(\mathbf{a}_i),~({a}_{i,j})$ & A Matrix\\
        $\mathbf{A}^\top,~\mathbf{A}^{(-1)}$& The transpose and inverse of matrices\\
        $\mathbf{I}$ & The Identity Matrix\\
        $a_{i,j}$&  The $i^{th}$ row and $j^{th}$ column element of a matrix $\mathbf{A}$\\
        $\mathbf{a}$, $a_i$ & A Vector and its $i^{th}$ element\\
        $\Ab$ & An Operator\\
    \hline
    \end{tabular}
    %}
    %}
    % \caption{Notation Table}
    \label{tab:my_label}
\end{table}
\section{Preliminaries}
\label{sec:related}

% \subsection{Notation}

% \vspace{10mm}

\subsection{Dictionary Learning for Classification}
In task-driven dictionary learning \cite{mairal2009supervised}, the common method for one-layer dictionary learning classifier is to jointly learn the dictionary matrix $\mathbf{D}$, the sparse representation $\mathbf{a}$ of a given vector $\mathbf{x}$, and the classifier parameter $\mathbf{C}$. Let  $(\mathbf{x}_j)_{1\le j \le N}$ be the data and $(\mathbf{y}_j)_{1\le j \le N}$ the associated labels. Task-driven DL  can be expressed as finding
\begin{equation}
    \argmind{\mathbf{D,(a_j)_{1\le j \le N},C}}{\sum_{j=1}^N f(\mathbf{x}_j,\mathbf{D},\mathbf{a}_j)+g(\mathbf{x}_j,\mathbf{y}_j,\mathbf{D},\mathbf{a}_j,\mathbf{C})}.
\end{equation}
In SDL, we learn the composition of a dictionary and a sparse reconstruction in order to reconstruct or synthesize the data, hence yielding the standard formulation,
\begin{equation}
    f(\mathbf{x,D,a})=\mathbf{
    %\arg\min_{D,a}
    \frac{1}{2} \|x-Da\|^2+\lambda \|a\|_1}, \quad \lambda \in (0,+\infty).
\end{equation}
Alternatively, in ADL, we directly operate on the data using a dictionary, leading to,
\begin{equation}
    f(\mathbf{x,D,a})=
    \mathbf{\frac{1}{2} \|a-Dx\|^2+\lambda \|a\|_1},
    \quad \lambda \in (0,+\infty).
\end{equation}
The term $g(\mathbf{x,y,D,a,C})$ may correspond to various kinds of loss functions, such as least-squares, cross-entropy, or hinge loss.

\subsection{Deep Dictionary Learning for Classification}
An efficient DDL approach \cite{mahdizadehaghdam2019deep} consists of computing
\begin{equation} \label{equ:ddl-1}
    \mathbf{\hat{y}=\varphi (C x^{(s)})},
\end{equation}
where $\mathbf{\hat{y}}$ denotes the estimated label, $\mathbf{C}$ is the classifier matrix, $\mathbf{\varphi}$ is a nonlinear function, and
\begin{equation}\label{equ:ddl-2}
\begin{split}
    \xb^{(s)}= &\Pb^{(s)}\circ \Mb_{\Db^{(s)}} \circ \Pb^{(s-1)}\circ \Mb_{\Db^{(s-1)}}\circ \\
    &\dots \circ \Pb^{(1)}\circ \Mb_{\Db^{(1)}}(\xb^{(0)}),
\end{split}
\end{equation}
where $\circ$ denotes the composition of operators. For every layer $r \in \{1,\dots, s\},~\Pb^{(r)}$ is a reshaping operator, which is a tall matrix. Moreover, $\Mb_{\Db^{(r)}}$ is a nonlinear operator computing a sparse representation within a synthesis dictionary matrix $\Db^{(r)}$. More precisely, for a given matrix $\Db^{(r)} \in \mathbb{R}^{m_r \times k_r}$,
\begin{equation} \label{equ:ddl-main}
\begin{split}
        \Mb_{\Db^{(r)}}:&~\mathbb{R}^{m_r} \to \mathbb{R}^{k_r} \\
        &\xb \mapsto \argmind{\mathbf{a}\in \mathbb{R}^{k_r}}~\mathcal{L}^R(\Db^{(r)},\mathbf{a},\xb),
\end{split}
\end{equation}
with
\begin{multline}
\mathcal{L}^R(\Db^{(r)},\mathbf{a},\xb) = \frac{1}{2}\|\xb-\Db^{(r)}\ab\|^2_F+\lambda \psi_r(\ab)+\frac{
    \alpha}{2}\|\ab\|_2^2\\
 +(\db^{(r)})^\top \ab ,
    \label{equ:ddl-detailed-main}
\end{multline}
% \begin{split}
%   \mathcal{L}^R(\Db^{(r)},\mathbf{a},\xb)&\\=\frac{1}{2}\|\xb-\Db^{(r)}\ab\|^2_F+\lambda \psi_r(\ab)+\frac{
%     \alpha}{2}\|\ab\|_2^2\\
%     &  +(\db^{(r)})^\top \ab ,
% \end{split}
% \end{equation}
where $(\lambda,\alpha) \in (0,+\infty)^2$, $\db^{(r)}\in \mathbb{R}^{k_r}$, and $\psi_r$ is a function in $\Gamma_0(\mathbb{R}^{k_r})$, the class of proper lower semicontinuous convex functions from $\mathbb{R}^{k_r}$ to $(-\infty,+\infty]$. A simple choice consists in setting $\db^{(r)}$ to zero, while
adopting the following specific form for $\psi_r$;
\begin{equation}\label{equ:psi}
    \psi_r =\|\cdot\|_1+\iota_{[0,+\infty)^{k_r}},
\end{equation}
where $\iota_S$ denotes the indicator function of a set $S$ (equal to zero in $S$ and $+\infty$ otherwise). Note that Eq. (\ref{equ:ddl-main}) corresponds to the minimization of a strongly convex function, which thus admits a unique minimizer, so making the operator $\Mb_{\Db^{(r)}}$ properly defined.

\section{Deep Metric and Transform Learning}
\label{sec:DeTraMe-Net}

\subsection{Proximal interpretation}
Our goal here is to establish an equivalent but more insightful solution for $\Mb_{\Db}$ in each layer.\\
% \noindent \textit{\textbf{Claim 1: }} \textit{$\Mb_{\Db}$ can be solved by a proximal operator of a transform learning with a metric $\Qb$: }
% \begin{equation}\label{equ:newMD-claim}
%     \mathbf{\Mb_D(x)}=\operatorname{prox}^{\Qb}_{\lambda \psi}(\Fb \xb-\cb).
% \end{equation}
% \vspace{3mm}
\begin{theorem}
Let $\mathcal{L}^R$ be the function defined by eq.~(\ref{equ:ddl-main}).
For every $\mathbf{D} \in \mathbb{R}^{m \times k}$,
let $\Qb=\Db^\top \Db+\alpha \Ib$, let  $\Fb=\Qb^{-1}\Db^\top$, and let $\cb=\Qb^{-1}\db$.
Then, for every $\mathbf{x}\in \mathbb{R}^m$,
\begin{equation}\label{equ:newMD}
    \mathbf{M_D(x)}=\argmind{\ab\in\mathbb{R}^k}\mathcal{L}^R(\Db,\ab,\xb)=\operatorname{prox}^{\Qb}_{\lambda \psi}(\Fb \xb-\cb),
\end{equation}
where $\operatorname{prox}^{\Qb}_{\lambda \psi}$ denotes the proximity operator of function $\lambda \psi$ in the metric
$\|\cdot \|_{\Qb} = \sqrt{(\cdot)^\top \Qb (\cdot)}$ induced by $\Qb$ \cite{Combettes_2010,Chouzenoux14jota}.
\end{theorem}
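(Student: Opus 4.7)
The plan is to exhibit the cost $\mathcal{L}^R(\Db,\ab,\xb)$, viewed as a function of $\ab$, as the sum of $\lambda\psi(\ab)$ and a quadratic form in $\ab$ that, up to a constant independent of $\ab$, coincides with $\tfrac12\|\ab-(\Fb\xb-\cb)\|_{\Qb}^{2}$. Once this is done, the claim follows directly from the definition of the proximity operator in the metric $\Qb$,
\begin{equation}
\operatorname{prox}^{\Qb}_{\lambda\psi}(\yb)=\argmind{\ab\in\mathbb{R}^k}{\lambda\psi(\ab)+\tfrac12\|\ab-\yb\|_{\Qb}^{2}},
\end{equation}
taken at $\yb=\Fb\xb-\cb$, and from the uniqueness of the minimizer of $\mathcal{L}^R(\Db,\cdot,\xb)$ already noted in the paper (strong convexity ensured by the term $\tfrac{\alpha}{2}\|\ab\|_2^{2}$ with $\alpha>0$).

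The first concrete step is to justify that $\Qb=\Db^{\top}\Db+\alpha\Ib$ is symmetric positive definite: it is a sum of a positive semidefinite matrix and $\alpha\Ib$ with $\alpha>0$, hence invertible, which makes $\Fb=\Qb^{-1}\Db^{\top}$, $\cb=\Qb^{-1}\db$, and the weighted norm $\|\cdot\|_{\Qb}$ well defined. Next I expand the quadratic part of $\mathcal{L}^R$,
\begin{equation}
\tfrac12\|\xb-\Db\ab\|^{2}+\tfrac{\alpha}{2}\|\ab\|_2^{2}+\db^{\top}\ab=\tfrac12\ab^{\top}\Qb\ab-\ab^{\top}(\Db^{\top}\xb-\db)+\tfrac12\|\xb\|_2^{2}.
\end{equation}

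The key algebraic step is to complete the square in the metric $\Qb$. Setting $\yb=\Qb^{-1}(\Db^{\top}\xb-\db)=\Fb\xb-\cb$, one can rewrite
\begin{equation}
\tfrac12\ab^{\top}\Qb\ab-\ab^{\top}\Qb\yb=\tfrac12\|\ab-\yb\|_{\Qb}^{2}-\tfrac12\|\yb\|_{\Qb}^{2},
\end{equation}
so that, absorbing the terms that do not depend on $\ab$ into a constant $\kappa(\xb)$,
\begin{equation}
\mathcal{L}^R(\Db,\ab,\xb)=\lambda\psi(\ab)+\tfrac12\|\ab-(\Fb\xb-\cb)\|_{\Qb}^{2}+\kappa(\xb).
\end{equation}
Minimizing in $\ab$ eliminates $\kappa(\xb)$ and gives exactly $\operatorname{prox}^{\Qb}_{\lambda\psi}(\Fb\xb-\cb)$, establishing \eqref{equ:newMD}.

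The only delicate point is the bookkeeping in the completion of the square, in particular keeping track of the cross-term $\db^{\top}\ab$ and ensuring it is absorbed into $\cb$ with the correct sign, and justifying that the generalized proximity operator in the metric $\Qb$ is indeed single-valued under our assumptions (which follows because $\psi\in\Gamma_0(\mathbb{R}^{k})$ and $\tfrac12\|\cdot-\yb\|_{\Qb}^{2}$ is strongly convex whenever $\Qb\succ 0$). I do not anticipate any further obstacle; everything else is routine linear algebra.
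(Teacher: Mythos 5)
Your proposal is correct and follows essentially the same route as the paper's proof: expand the quadratic terms, complete the square in the $\Qb$-metric to write $\mathcal{L}^R(\Db,\ab,\xb)=\tfrac12\|\ab-\Fb\xb+\cb\|_{\Qb}^{2}+\lambda\psi(\ab)+\kappa(\xb)$ with $\kappa(\xb)$ independent of $\ab$, and identify the minimizer with $\operatorname{prox}^{\Qb}_{\lambda\psi}(\Fb\xb-\cb)$. Your added remarks on the positive definiteness of $\Qb$ and the single-valuedness of the generalized proximity operator are sound and, if anything, make the argument slightly more self-contained than the paper's version.
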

\begin{proof}
To simplify notation, we omit the superscript which denotes the layer in Eq.~(\ref{equ:ddl-main}) which, in turn, aims at finding the sparse representation $\mathbf{a}$.
For every $\mathbf{D} \in \mathbb{R}^{m \times k},~\mathbf{a}\in \mathbb{R}^k$, and $\mathbf{x}\in \mathbb{R}^m$, Eq. (\ref{equ:ddl-detailed-main}) can thus be re-expressed as follows:
\begin{equation}
   \begin{split}
    \mathcal{L}^R(\Db,\ab,\xb) &=\frac{
    1}{2}\big(\|\xb\|^2-2 \xb^\top \Db \ab +\ab^\top(\Db^\top \Db+\alpha \mathbf{I} )\ab \big) \\
    &+\lambda \psi(\ab)+\db^\top \ab \\
    & = \tilde{\mathcal{L}}^R(\Db,\ab,\xb)+\frac{1}{2}(\|\xb\|^2-\|\Fb \xb\|_{\Qb}^2-\|\cb\|_{\Qb}^2)\\
    &+\xb^\top \Db \cb,\\
\end{split}
\end{equation}
where
\begin{equation}\label{equ:proximal-a}
\tilde{\mathcal{L}}^R(\Db,\ab,\xb)=\frac{1}{2}\|\ab-\Fb \xb+\cb\|_{\Qb}^2+\lambda \psi(\ab),
\end{equation}
with
\begin{equation}\label{equ:transform}
\Qb=\Db^\top \Db+\alpha \Ib,\, \Fb=\Qb^{-1}\Db^\top,~\cb=\Qb^{-1}\db,
\end{equation}
and $\|\cdot \|_{\Qb} = \sqrt{(\cdot)^\top \Qb (\cdot)}$ denotes the weighted Euclidean norm induced by $\Qb$. Determining the optimal sparse representation
$\mathbf{a}$ of
$\mathbf{x}\in \mathbb{R}^m$ is therefore, equivalent to computing the proximity operator in Eq. (\ref{equ:proximal-a}), that is Eq. (\ref{equ:newMD-claim}):
\begin{equation}\label{equ:newMD}
    \mathbf{\Mb_D(x)}=\argmind{\ab\in\mathbb{R}^k}\tilde{\mathcal{L}}^R(\Db,\ab,\xb)=\operatorname{prox}^{\Qb}_{\lambda \psi}(\Fb \xb-\cb).
\end{equation}
\end{proof}
%The latter expression makes the link with
This thus establishes a re-expression of the solution of the representation procedure as
the proximity operator of $\lambda\psi$ within the metric induced by the symmetric definite positive matrix $\mathbf{Q}$ \cite{Combettes_2010,Chouzenoux14jota}. Furthermore, it shows that the SDL can be equivalently viewed as an ADL formulation involving the dictionary matrix $\mathbf{F}$, provided that a proper metric is chosen.

\subsection{Multilayer representation}
Consequently, by substituting Eq. (\ref{equ:newMD}) in Eqs. (\ref{equ:ddl-1}) and (\ref{equ:ddl-2}), the DDL model can be re-expressed in a more concise and comprehensive form as
\begin{equation}\label{equ:JCDML}
\begin{split}
    \mathbf{\hat{y}=}& \varphi \circ \Ab^{(s+1)}\circ \operatorname{prox}^{\Qb^{(s)}}_{\lambda \psi_s} \circ \Ab^{(s)} \circ \operatorname{prox}^{\Qb^{(s-1)}}_{\lambda \psi_{s-1}} \circ \dots\\
    &\circ \operatorname{prox}^{\Qb^{(1)}}_{\lambda \psi_{1}} \circ \Ab^{(1)}(\xb^{(0)}),
\end{split}
\end{equation}
where, for $1\leq r \leq s$, the affine operators $\Ab^{(r)}$ %, $1\leq r \leq s$
mapping $\zb^{(r-1)}\in \mathbb{R}^{k_{r-1}}$ to $\zb^{(r)}\in \mathbb{R}^{k_{r}}$ by an analysis transform $\Wb^{(r)}$ and a shift term $\cb^{(r)}$, and explicitly as,
\begin{equation}\label{equ:transformed-ddl}
\begin{split}
    \forall r\in \{1,\dots,s\}, \Ab^{(r)}:& \mathbb{R}^{k_{r-1}} \to \mathbb{R}^{k_r}\\
    & \zb^{(r-1)} \mapsto \Wb^{(r)}\zb^{(r)}-\cb^{(r)}
\end{split}
\end{equation}
with $k_0=m_1$ and
\begin{equation}
    \begin{split}
        &\Wb^{(1)}=\Fb^{(1)},\\
        \forall r\in \{2,\dots,s\},\\
        &\Wb^{(r)}=\Fb^{(r)}\Pb^{(r-1)},\\
        &\Wb^{(s+1)}=\mathbf{C} \Pb^{(s)}\\
        \forall r \in \{1,\dots,s\},\\
        &\Qb^{(r)}=(\Db^{(r)})^\top \Db^{(r)}+\alpha \Ib,\\
        &\Fb^{(r)}=(\Qb^{(r)})^{-1}(\Db^{(r)})^\top,\\
        &\cb^{(r)}=(\Qb^{(r)})^{-1}\db^{(r)}.\\
    \end{split}
\end{equation}

% Moreover, in terms of eq. (\ref{equ:transform}), $\mathbf{F}$ can be viewed as a regularized pseudo-inverse of $\mathbf{D}$. More precisely, if $\alpha = 0$ and $\mathbf{D^T D}$ is invertible, $\mathbf{F}$ is the standard pseudo-inverse of $\mathbf{D}$ associated to the dual frame of dictionary $\mathbf{D}$. When the dictionary corresponds to a tight frame, i.e. $\mathbf{D^T D = \mu I}$ with $\mu \in (0, +\infty)$, then the equivalence between synthesis and analysis formulations only arises in the original Euclidean metric.

\noindent Eq. (\ref{equ:transformed-ddl}) shows that, for each layer $r$, we obtain a structure similar to a linear layer by treating $\Wb^{(r)}$ as the weight operator and $\cb^{(r)}$ as the bias parameter, which are referred as the Transform learning part in DeTraMe method.
%A crucial difference however is that the activation operators $(\mathbf{prox^{Q_{(r)}}_{\lambda\psi_r}} )_{1 \leq r \leq s}$ are non-separable.
In standard Forward Neural Networks (FNNs), the activation functions can be interpreted as proximity operators of convex functions
\cite{Combettes2018}. Eq. \eqref{equ:JCDML} attests that our model is more general, in the sense that different metrics are introduced for these operators. In the next section, we propose an efficient method to learn these metrics in a supervised manner.

%adding more interpretation of NN and ddl
%Since eq. (\ref{equ:transformed-ddl}) can be treated as a standard FNNs, to solve the eq. (\ref{equ:JCDML}) is equivalent to solve the following proximal operator for each layer $r$:
%\begin{equation}\label{equ:proximal-operator}
%    \operatorname{prox}^{\Qb}_{\lambda \psi}(\zb)=\arg\min_{\Ub} \frac{
%    1}{2}\|\Ub-\Zb\|_{\Qb}^2+\lambda \psi(\Ub).
%\end{equation}
%The proposed formulation is called Q-Metric Learning and can be solved in an RNN way, as we explain in Section \ref{sec:algorithm}.

\section{Q-Metric Learning}
\label{sec:algorithm}

\subsection{Prox computation}
Reformulation \eqref{equ:JCDML} has the great advantage to allow us to benefit from algorithmic frameworks developed for FNNs, provided that we are able to compute efficiently
\begin{equation}\label{equ:proximal-operator}
    \operatorname{prox}^{\Qb}_{\lambda \psi}(\Zb)=\argmind{\Ub \in \mathbb{R}^{k\times N}} \frac{
    1}{2}\|\Ub-\Zb\|_{F,\Qb}^2+\lambda \psi(\Ub),
\end{equation}
where $\|\cdot\|_{F,\Qb}=\sqrt{\operatorname{tr}((\cdot)\Qb(\cdot)^\top)}$ is the $\Qb$-weighted Frobenius norm.
Hereabove, $\Zb$ is a matrix where the $N$ samples associated with the training set have been stacked columnwise. A similar convention is used to construct $\mathbf{X}$ and $\mathbf{Y}$ from $(\mathbf{x}_j)_{1\le j \le N}$ and $(\mathbf{y}_j)_{1\le j \le N}$.
\begin{theorem}\label{th:2}
Assume that an elastic-net like regularization is adopted by setting
$\psi=\|\cdot\|_1+\iota_{[0,+\infty)^{k\times N}}+\frac{\beta}{2\lambda}\|\cdot\|_{F}^2$ with $\beta \in (0,+\infty)$. For every $\Zb\in \mathbb{R}^{k\times N}$, the elements of $\operatorname{prox}^{\Qb}_{\lambda \psi}(\Zb)$ in
eq. \eqref{equ:proximal-operator}
satisfy
for every $i\in \{1,\ldots,k\}$,
and $j\in \{1,\ldots,N\}$,
\begin{equation}\label{e:fixptc}
u_{i,j}=
 \begin{cases}
  \frac{q_{i,i}}{q_{i,i}+\beta}z_{i,j} - v_{i,j} & \mbox{if $q_{i,i} z_{i,j}> (q_{i,i}+\beta)v_{i,j}$}\\
  0 &\mbox{otherwise},
  \end{cases}
\end{equation}
where $v_{i,j}=\frac{\lambda+\sum_{\ell=1,\ell \neq i }^k q_{i,\ell}(u_{\ell,j}-z_{\ell,j})}{q_{i,i}+\beta}$.
\end{theorem}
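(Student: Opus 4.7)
The plan is to derive the characterization from the Karush--Kuhn--Tucker (KKT) optimality conditions for the minimization defining $\operatorname{prox}^{\Qb}_{\lambda\psi}(\Zb)$. First, I would observe that because $\|\cdot\|_1$, $\iota_{[0,+\infty)^{k\times N}}$, and $\|\cdot\|_F^2$ are all separable across the columns of $\Ub$, and because the weighted Frobenius norm decomposes columnwise as
\[
\|\Ub-\Zb\|_{F,\Qb}^2=\sum_{j=1}^N(\mathbf{u}_j-\mathbf{z}_j)^\top\Qb(\mathbf{u}_j-\mathbf{z}_j),
\]
the global minimization splits into $N$ independent $k$-dimensional nonnegatively-constrained quadratic programs. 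Because $\Qb$ is symmetric positive definite and $\beta>0$, each subproblem is strongly convex on $[0,+\infty)^k$ and thus has a unique minimizer, so it suffices to characterize its entries.

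For a fixed column index $j$, I would then write out the KKT system. Introducing a Lagrange multiplier $\mu_{i,j}\ge 0$ for the constraint $u_{i,j}\ge 0$, the partial derivative of the smooth part of the objective with respect to $u_{i,j}$ is $q_{i,i}(u_{i,j}-z_{i,j})+\sum_{\ell\neq i}q_{i,\ell}(u_{\ell,j}-z_{\ell,j})+\beta u_{i,j}+\lambda$, so stationarity together with complementary slackness $\mu_{i,j}u_{i,j}=0$ yields
\[
(q_{i,i}+\beta)u_{i,j}-q_{i,i}z_{i,j}+\sum_{\ell\neq i}q_{i,\ell}(u_{\ell,j}-z_{\ell,j})+\lambda=\mu_{i,j}\ge 0 .
\]
The case analysis is then immediate. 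When $u_{i,j}>0$, $\mu_{i,j}=0$, and dividing by $q_{i,i}+\beta$ rearranges the identity into $u_{i,j}=\tfrac{q_{i,i}}{q_{i,i}+\beta}z_{i,j}-v_{i,j}$; positivity of the right-hand side is exactly the stated activation condition $q_{i,i}z_{i,j}>(q_{i,i}+\beta)v_{i,j}$. Conversely, setting $u_{i,j}=0$ reduces $\mu_{i,j}\ge 0$ to the reverse inequality $(q_{i,i}+\beta)v_{i,j}\ge q_{i,i}z_{i,j}$, which produces the second branch of~(\ref{e:fixptc}). Combining the two cases recovers the piecewise formula.

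I would finally stress that~(\ref{e:fixptc}) is only an \emph{implicit} characterization: each $u_{i,j}$ depends on the other entries of its column through $v_{i,j}$, so the relation describes a coupled fixed-point system rather than a closed form---which is precisely what motivates the RNN unrolling used later in the paper. The strong convexity of each column subproblem, already noted, guarantees that this fixed-point system admits a unique solution in $[0,+\infty)^{k\times N}$, and that solution must coincide with $\operatorname{prox}^{\Qb}_{\lambda\psi}(\Zb)$.

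The main obstacle I anticipate is essentially bookkeeping rather than conceptual: one must carefully peel off the diagonal contribution $q_{i,i}u_{i,j}$ from the off-diagonal part of $(\Qb\mathbf{u}_j)_i$ and regroup the terms into $v_{i,j}$ in \emph{exactly} the right normalization so that the activation threshold lines up precisely with the sign of $\tfrac{q_{i,i}}{q_{i,i}+\beta}z_{i,j}-v_{i,j}$. Beyond this index juggling, no non-elementary tool is required---the $\Qb$-metric is handled directly through the explicit quadratic form, and no ambient proximity-operator identity from the literature has to be invoked.
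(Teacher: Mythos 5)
Your proposal is correct and follows essentially the same route as the paper's proof: the paper writes the first-order optimality condition $\mathbf{0}\in \Qb(\Ub-\Zb)+\beta\Ub+\lambda\partial\widetilde{\psi}(\Ub)$ via subdifferential calculus and then peels off the diagonal term $q_{i,i}$ in a coordinate-wise case analysis, which is exactly your KKT system with the multiplier $\mu_{i,j}$ playing the role of the slack in the subdifferential inclusion at $u_{i,j}=0$. Your added remarks on column separability and on the implicit fixed-point nature of \eqref{e:fixptc} are consistent with (and slightly more explicit than) the paper's presentation.
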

% An elastic-net like regularization is chosen by setting
% $\psi=\|\cdot\|_1+\iota_{[0,+\infty)^{k\times N}}+\frac{\beta}{2\lambda}\|\cdot\|_{F}^2$ with $\beta \in (0,+\infty)$.
% We have, in particular, observed that the last quadratic term has a positive influence in increasing stability and avoiding overfitting.
\begin{proof}
As $\Qb=\Db^\top \Db+\alpha \Ib$ in Eq. (\ref{equ:transform}), Eq. (\ref{equ:proximal-operator}) is actually equivalent to solving the following optimization problem:
%\begin{equation}
%\begin{split}
%    & \arg \min_{\Ub} \frac{1}{2}\|\Db(\Ub-\Zb)\|_F^2+\frac{\alpha}{2}\|\Ub-\Zb\|_F^2+\lambda\|\Ub\|_1 \\
%   & \text{s.t.} ~\mathbf{U\geq 0}.
%\end{split}
%\end{equation}
%In order to avoid the overfitting and reduce the correlation, we also add another $\ell_2$ norm for the sparse representation $\mathbf{U}$. Therefore, the final objective function is formulated as:
\begin{equation}\label{equ:final-prox}
\begin{split}
    \minimize{\Ub \in [0,+\infty)^{k\times N}}{}& \frac{1}{2}\|\Db(\Ub-\Zb)\|_F^2+\frac{\alpha}{2}\|\Ub-\Zb\|_F^2\\ &+\frac{\beta}{2}\|\Ub\|_F^2+\lambda\|\Ub\|_1.
\end{split}
\end{equation}\\

\noindent \textit{\textbf{Claim:}} \textit{We show next that the solution of Eq. (\ref{equ:final-prox}) is obtained as an iteration of the form:}
\begin{equation}\label{equ:u-update}
%\resizebox{0.99\columnwidth}{!}{
\Ub_{t+1}=
  \operatorname{ReLU}\big((\mathbf{h} \mathbf{1}^\top)\odot \Zb+\widetilde{\Wb}(\Ub_t-\Zb)-\mathbf{b} \mathbf{1}^\top\big).
%  }
\end{equation}
\vspace{3mm}

Various iterative splitting methods could be used to find the unique minimizer of the above optimized convex function \cite{boyd2004convex,Komodakis2014}. Our purpose is to develop an algorithmic solution for which classical NN learning techniques can be applied in a fast and convenient manner.
By subdifferential calculus, the solution $\Ub$ to the problem \eqref{equ:final-prox} satisfies the following optimality condition:
%By using Coordinate Descent Method, taking the derivative of eq. (\ref{equ:final-prox}) w.r.t $\mathbf{U}$ and set it to $\mathbf{0}$, we obtain
\begin{equation}\label{equ:extreme}
%\begin{split}
%    &(\Db^\top \Db+\alpha \Ib)(\Ub-\Zb)+\lambda \partial(\|\Ub\|_1)=0, ~\Ub\geq \mathbf{0}\\
    %&
    \mathbf{0}\in \Qb(\Ub-\Zb)+\beta \Ub+\lambda \partial\widetilde{\psi}(\Ub),
%    {split}
\end{equation}
where $\widetilde{\psi} = \|\cdot\|_1+\iota_{[0,+\infty)^{k\times N}}$.
Element-wise rewriting of Eq.~(\ref{equ:extreme}) yields, for every $i\in \{1,\ldots,k\}$,
and $j\in \{1,\ldots,N\}$,
\begin{equation}\label{e:optcond}
\begin{split}
    0\in\; \sum_{\ell=1}^k q_{i,\ell}(u_{\ell,j}-z_{\ell,j})&+ \beta u_{i,j}
   +
 \begin{cases}
    (-\infty,\lambda ]  & \quad \mbox{if $u_{i,j}=0$}\\
    \lambda & \quad \mbox{if $u_{i,j}>0$}\\
    \varnothing & \quad \mbox{if $u_{i,j}< 0$}
  \end{cases}.
\end{split}
\end{equation}

%Let $\mathbf{Q}\in\mathbb{R}^{d\times d},~\mathbf{U}\in\mathbb{R}^{d\times N},~\mathbf{Z}\in\mathbb{R}^{d\times N}$, where $d$ is the feature dimension and $N$ is the number of training samples. Then, the element-wise equation of
%eq. (\ref{equ:extreme}) is written as
%\begin{equation}
%\begin{cases}
%\displaystyle
%-\sum_{\ell=1}^k q_{i,\ell}(u_{\ell,j}-z_{\ell,j})- \beta u_{i,j}\in (-\infty,\lambda]
%& \mbox{if $u_{i,j}=0$}\\
%\displaystyle
%\sum_{\ell=1}^k q_{i,\ell}(u_{\ell,j}-z_{\ell,j})+ \beta u_{i,j} + \lambda = 0
%& \mbox{if $u_{i,j}> 0$}\\
%u_{i,j} \ge 0.
%\\
%   0= \sum_{\ell=1}^k q_{i,\ell}(u_{\ell,j}-z_{\ell,j})+ \beta u_{k,j}
% \begin{cases}
%    [0,\lambda ],  & \quad u_{k,j}=0\\
%    \lambda, & \quad u_{k,j}>0
%  \end{cases}.
%\end{cases}
%\end{equation}
%where $q_{i,k}$ denote the $i^{th}$ row and $j^{th}$ column of $Q$, $u_{ij}$ denote the $i^{th}$ element in $j^{th}$ column of $U$, and $z_{ij}$ denote the $i^{th}$ element in $j^{th}$ column of $Z$.
Let us adopt a block-coordinate approach and update the $i$-th
row of $\mathbf{U}$ by fixing all the other ones.
%\begin{equation}
%\begin{split}
%    0\in\;& q_{i,i}(u_{i,j}-z_{i,j})+ \beta u_{i,j}+
%   \sum_{\ell=1,\ell \neq i }^k q_{i,\ell}(u_{\ell,j}-z_{\ell,j})\\
%   &+
% \begin{cases}
%    (-\infty,\lambda ]  & \quad \mbox{if $u_{i,j}=0$}\\
%    \lambda & \quad \mbox{if $u_{i,j}>0$}\\
%    \varnothing & \quad \mbox{if $u_{i,j}< 0$.}
%  \end{cases}
%\end{split}
%\end{equation}
As $\mathbf{Q}$ is a positive definite matrix, $q_{i,i} > 0$ and Eq.~\eqref{e:optcond}
implies that
\begin{equation}\label{e:fixptc}
u_{i,j}=
 \begin{cases}
  \frac{q_{i,i}}{q_{i,i}+\beta}z_{i,j} - v_{i,j} & \mbox{if $q_{i,i} z_{i,j}> (q_{i,i}+\beta)v_{i,j}$}\\
  0 &\mbox{otherwise},
  \end{cases}
\end{equation}
where $v_{i,j}=\frac{\lambda+\sum_{\ell=1,\ell \neq i }^k q_{i,\ell}(u_{\ell,j}-z_{\ell,j})}{q_{i,i}+\beta}$.
\end{proof}
% This is also equivalent to
% \begin{equation}
% \begin{split}
% u_{p,j}=&
%   RELU(\frac{\beta}{q_{p,p}+\beta}z_{p,j}-v_{p,j}).
% \end{split}
% \end{equation}
\noindent Let
\begin{equation}\label{e:reparam}
\begin{split}
    &\widetilde{\mathbf{W}}= -
    \left(\frac{q_{i,\ell}}{q_{i,i}+\beta}\delta_{i-\ell}\right)_{1\leq i,\ell \leq k},\\
    &\mathbf{h}=\left(\frac{q_{i,i}}{q_{i,i}+\beta}\right)_{1\leq i \leq k}\in [0,1]^k,\\
    &\mathbf{b}=\left(\frac{\lambda}{q_{i,i}+\beta}\right)_{1\leq i \leq k}\in [0,+\infty)^k,\\
    & \mathbf{1}=[1,\ldots,1]^\top\in \mathbb{R}^N,\\
\end{split}
\end{equation}
where $(\delta_\ell)_{\ell \in \mathbb{Z}}$ is the Kronecker sequence
(equal to 1 when $\ell=0$ and 0 otherwise).
Then, Eq. \eqref{e:fixptc} suggests that the elements of
$\Ub$ can be globally updated, at iteration $t$,
%as follows
{as shown in Eq. (\ref{equ:u-update}): }
\begin{equation*}
%\resizebox{0.99\columnwidth}{!}{
\Ub_{t+1}=
  \operatorname{ReLU}\big((\mathbf{h} \mathbf{1}^\top)\odot \Zb+\widetilde{\Wb}(\Ub_t-\Zb)-\mathbf{b} \mathbf{1}^\top\big),
%  }
\end{equation*}
with $\odot$ denoting the Hadamard (element-wise) product.
Note that a similar expression can be derived by applying a preconditioned
forward-backward algorithm \cite{Chouzenoux14jota} to Eq. \eqref{equ:final-prox}, where the
preconditioning matrix is $\operatorname{Diag}(q_{1,1},\ldots,q_{k,k})$, which has been detailed in the Appendix \ref{sec:appendix-A}.
The implementation of the method allowing us to compute the proximity
operator in (\ref{equ:proximal-operator}) is summarized below:
\renewcommand{\algorithmicrequire}{\textbf{Input:}}
\renewcommand{\algorithmicensure}{\textbf{Output:}}
\begin{algorithm}[htb]
 \caption{Q-Metric ReLU Computation }
 \label{alg:alg1}
	\begin{algorithmic}[1]
		\REQUIRE matrix $\widetilde{\mathbf{W}}$ and $\Zb$,
		vectors $\mathbf{h}$ and $\mathbf{b}$, and maximum iteration number $t_{\rm max}$
		\ENSURE Sparse Representation $\mathbf{U^*}$
		\STATE Initialize $\mathbf{U}_0$ as the null matrix and set $t=0$
		\WHILE {not converged \text{and} $t< t_{\rm max}$}
		\STATE
		Update $\mathbf{U}_{t+1}$ according to Eq. (\ref{equ:u-update})
		\STATE $t\leftarrow t+1$
		\ENDWHILE
		%\State \textbf{end while}
	\end{algorithmic}
\end{algorithm}

\subsection{RNN implementation}
% \subsection{Choosing $\mathbf{Q}$ in Q-Metric}
%When $\mathbf{W_{prox}}$ is a fully connected operator or convolutional operator,
Given $\widetilde{\mathbf{W}}$, $\mathbf{h}$, and $\mathbf{b}$,
Alg. (\ref{alg:alg1}) can be viewed as an RNN structure
for which $\Ub_t$ is the hidden variable and
$\Zb$ is a constant input over time. By taking advantage
of existing gradient back-propagation techniques for RNNs,
$(\widetilde{\mathbf{W}},\mathbf{h},\mathbf{b})$ can thus be directly computed
in order to minimize the global loss $\mathcal{L}$. This shows that, thanks to the
re-parameterization in Eq. \eqref{e:reparam},
Q-Metric Learning has been recast as the training of a specific RNN.

%Similar to RNN, in the forward pass, given $\mathbf{W_{prox},~h,~b}$, $\mathbf{U}$ can be viewed as the hidden variable in RNN, and $\mathbf{Z}$ is regarded as the input fixed with time. By taking the advantage of back-propagation through time of RNN, $\mathbf{W_{prox},~h,~b}$ can be directly updated by the gradient of the global loss $\mathbf{L(Y,\hat{Y})}$. Therefore,
%Alg. (\ref{alg:alg1}), Q-Metric Learning, can be treated as a specific RNN.

Note that $\Qb$
%= \mathbf{D}^\top \mathbf{D} + \alpha \mathbf{I}$
is a $k\times k$ symmetric matrix.
%with minimum eigenvalue greater than or equal to $\alpha$.
%as $\mathbf{Q}$ is an arbitrary metric between the new sparse representation $\mathbf{U}$ and the active map $\mathbf{Z}$. Since $\mathbf{Q}=\mathbf{D^T D+ \alpha I}$, $\mathbf{Q}$ is always a square matrix. During previous formula derivation, matrix format is used for all variables.
In order to reduce the number of parameters and ease of optimizing them,
we choose a block-diagonal structure for $\mathbf{Q}$.
In addition, for each of the blocks, either an arbitrary or
convolutive structure can be adopted.
Since the structure of $\mathbf{Q}$ is reflected by the structure
of $\widetilde{\Wb}$, this leads in Eq.  \eqref{equ:u-update} to fully connected or convolutional
layers where the channel outputs are linked to non overlapping blocks of the inputs.
% This is illustrated in Fig. \ref{fig:W}.
In our experiments on images, Convolutional-RNNs have been preferred for practical efficiency.

%which
%like a common fully connection layer or a convolutional operator layer which is independent of channels. Since the majority part of $\mathbf{Q}$ is carried in $\mathbf{W_{prox}}$, we just use $\mathbf{W_{prox}}$ to explain it. Let $\mathbf{e=u-z}$ of each sample, as shown in Figure \ref{fig:W}, each block sub-matrices in $\mathbf{W_{prox}}$ is only multiple with the representations in each channel, while well arranging the circular elements in each block, then  $\mathbf{W_{prox}}$ is reduce to the convolutional operator and also convolue with each channel independently.
% \begin{figure}[htb]
%     \centering
%     \includegraphics[width=0.8\columnwidth]{W.jpg}
%     \caption{Fully connected and convolutional operators: each $\widetilde{\mathbf{W}}_i$ is the sub-matrix acting on $i$th channel, $i=1,2,3$. The non-zero entries are in black, while the zero entries are in white.
%     $e_j$ with $j=1,2$ are the first two columns of matrix $\Ub_t-\Zb$ (case $N=2$).}
%     %The circularly arranged entries is equivalent to the convolutional operator in (b), and each sub-matrix is a low-rank matrix.
%     \label{fig:W}
% \end{figure}
%Therefore, by using a block-diagonal structure of $\mathbf{Q}$, the purposed solution in Alg. \ref{alg:alg1} can also be directly %extended to $\mathbf{Q}\in\mathbb{R}^{d\times d \times channels},~\mathbf{U}\in\mathbb{R}^{d\times N \times channels},$ and $\mathbf{Z}\in\mathbb{R}^{d\times N \times channels}$.
%implemented in a normal RNN or Convolutional-RNN.

\subsection{Training procedure}
We have finally transformed our DDL approach in an alternation of
linear layers and specific RNNs. This not only simplifies the
implementation of the resulting DeTraMe-Net
by making use of standard NN tools,
but also allows us to employ well-established stochastic gradient-based learning strategies.
%\noindent  Thus far, our algorithm can be easily and fully implemented in a neural network way by using FNN and RNN for each layer.
Let $\rho_t>0$ be the learning rate at iteration $t$, the simplified
form of a training method for DeTraMe-Nets is provided in Alg.~\ref{alg:alg2}.

\renewcommand{\algorithmicrequire}{\textbf{Initialization:}}
\renewcommand{\algorithmicensure}{\textbf{Output:}}
\begin{algorithm}[!htb]
	\caption{Deep Transform and Metric Learning Network}
	\label{alg:alg2}
	\begin{algorithmic}[1]
	   % \STATE $\rho$ is the learning rate
	    \REQUIRE
	    \FOR{$r=1,\dots,s+1$}
	    \STATE Randomly initialize $\mathbf{W}^{(r)}_0$, $\mathbf{c}^{(r)}_0$, $\widetilde{\mathbf{W}}_{0}^{(r)}$, $\mathbf{h}^{(r)}_0$,  and $\mathbf{b}^{(r)}_0$.
	    \ENDFOR
	    \STATE Set $t=0$.
		\WHILE {not converged \text{and} $t< t_{\rm max}$}
		\STATE \textbf{Forward pass:}
		\STATE $\mathbf{U}_t^{(0)}=\mathbf{X}$
		%\STATE $\mathbf{U^{(0)}_*=prox^{Q_t^{(0)}}_{\lambda \psi_0}}(\mathbf{Z^{(0)}})$ solved by Alg. \ref{alg:alg1}
		\FOR{$r=1,\dots,s+1$}
		\STATE $\mathbf{Z}_t^{(r)}=\mathbf{W}_t^{(r)}\mathbf{U}_t^{(r-1)}-\mathbf{c}_t^{(r)}$
		\IF{$r\leq s$}
		\STATE $\mathbf{U}_t^{(r)} =\operatorname{prox}^{\Qb_t^{(r)}}_{\lambda \psi_r}(\mathbf{Z}_t^{(r)})$ by Alg. \ref{alg:alg1}
		\ENDIF
		\ENDFOR
		\STATE $\mathbf{\hat{Y}}_t = \varphi(\mathbf{Z}_t^{(s+1)})$
		\STATE \textbf{Loss:} \mbox{$\mathcal{L}'(\boldsymbol{\theta}_t) = \mathcal{L}(\mathbf{Y},\mathbf{\hat{Y}}_t)$, $\boldsymbol{\theta}_t$: vector of all parameters}
		\STATE \textbf{Backward pass:}
		\FOR{$r=1,\dots,s+1$}
		\STATE $\mathbf{W}_{t+1}^{(r)}=\mathbf{W}_t^{(r)}-\rho_t \frac{\partial \mathcal{L}'}{\partial \mathbf{W}^{(r)}}(\boldsymbol{\theta}_t)$
		\STATE$\mathbf{c}_{t+1}^{(r)}=\mathbf{c}_t^{(r)}-\rho_t \frac{\partial \mathcal{L}'}{\partial \mathbf{c}^{(r)}}(\boldsymbol{\theta}_t)$
		\ENDFOR
		\FOR{$r=1,\dots,s$}
		\STATE $\widetilde{\mathbf{W}}_{t+1}^{(r)}=\mathsf{P}_{\mathcal{D}_0}\left(\widetilde{\mathbf{W}}^{(r)}_t-\rho_t \frac{\partial \mathcal{L}'}{\partial \widetilde{\mathbf{W}}^{(r)}}
		(\boldsymbol{\theta}_t)\right)
		$
		\STATE $\mathbf{h}_{t+1}^{(r)}=\mathsf{P}_{[0,1]^k}\left(\mathbf{h}_t^{(r)}-\rho_t \frac{\partial \mathcal{L}'}{\partial \mathbf{h}^{(r)}}(\boldsymbol{\theta}_t)\right)$
		\STATE$\mathbf{b}_{t+1}^{(r)}=\mathsf{P}_{[0,+\infty)^k}\left(\mathbf{b}_t^{(r)}-\rho_t \frac{\partial \mathcal{L}'}{\partial \mathbf{b}^{(r)}}(\boldsymbol{\theta}_t)\right)$
		\ENDFOR
		\STATE $t\leftarrow t+1$
		\ENDWHILE
		%\State \textbf{end while}
	\end{algorithmic}
\end{algorithm}
The constraints on the parameters of the RNNs have been imposed by projections.
In Alg.~\ref{alg:alg2}, $\mathsf{P}_{\mathcal{S}}$ denotes the projection onto a nonempty closed convex set
$\mathcal{S}$ and
$\mathcal{D}_0$ is the vector space of $k\times k$ matrices with diagonal terms equal to 0.

\section{Experiments and Results}
\label{sec:experiments}
In this section, our DeTraMe-Net method is evaluated on three popular datasets, namely CIFAR10 \cite{CIFAR}, CIFAR100 \cite{CIFAR} and Street View House Numbers (SVHN) \cite{SVHN}. Since the common NN architectures are plain networks such as ALL-CNN \cite{ALLCNN} and residual ones, such as ResNet \cite{resnet} and WideResNet \cite{WRN}, we compare DeTraMe-Net with these three respective state-of-the-art architectures. All the experiments of the state-of-the-arts and our method are re-implemented and repeated over 5 runs.

\begin{table*}[!htb]
    \centering
    %\small{
    \resizebox{\textwidth}{!}{%
    \begin{tabular}{c|c|c|c|c}
    \hline
    \hline
        DeTraMe-PlainNet 3-layer &PlainNet 3-layer & PlainNet 6-layer &  PlainNet 9-layer &  PlainNet 12-layer \\
        \hline
         \multicolumn{5}{c}{Input 32 x 32 RGB Image with dropout(0.2) }\\
        \hline
        $3 \times 3$ conv 96 &$3 \times 3$ conv 96 RELU &$3 \times 3$ conv 96 RELU&$3 \times 3$ conv 96 RELU & $3 \times 3$ conv 96 RELU \\
       + Q-Metric: $3 \times 3$ conv 96  & &$3 \times 3$ conv 96 RELU&$3 \times 3$ conv 96 RELU &$3 \times 3$ conv 96 RELU  \\
        &                   &                  &$3 \times 3$ conv 96 RELU &$3 \times 3$ conv 96 RELU  \\
        &                &                  &with stride=2, dropout(0.5)  &with stride=2, dropout(0.5)  \\

        &                &  & &$3 \times 3$ conv 192 RELU  \\
        \hline
        $3 \times 3$ conv 96 &$3 \times 3$ conv 96 RELU  &$3 \times 3$ conv 96 RELU& $3 \times 3$ conv 192 RELU& $3 \times 3$ conv 192 RELU  \\
        with stride=2 &with stride=2       &  with stride=2, dropout(0.5)& $3 \times 3$ conv 192 RELU& $3 \times 3$ conv 192 RELU  \\
        + Q-Metric: $3 \times 3$ conv 96                    &  &$3 \times 3$ conv 192 RELU & $3 \times 3$ conv 192 RELU& with stride=2, dropout(0.5) \\
        &                    &                  &with stride=2, dropout(0.5)  & $3 \times 3$ conv 192 RELU \\
        \hline
        $3 \times 3$ conv 10 &$3 \times 3$ conv 10 RELU &  $3 \times 3$ conv 192 RELU  & $3 \times 3$ conv 192 RELU& $3 \times 3$ conv 192 RELU \\
         with stride=2 &  with stride=2 &$3 \times 3$ conv 10 RELU& $1 \times 1$ conv 192 RELU& with stride=2 \\
        +Q-Metric: $3 \times 3$ conv 10 &             &with stride=2 & $1 \times 1$ conv 10 RELU& $3 \times 3$ conv 192 RELU \\
        & & & & $1 \times 1$ conv 192 RELU \\
        &  & & & $1 \times 1$ conv 10 RELU \\
        \hline
       \multicolumn{5}{c}{Global Average Pooling} \\
       \hline
       \multicolumn{5}{c}{Softmax} \\
       \hline
       \hline
    \end{tabular}}
    %}
    \caption{Model Description of PlainNet}
    \label{tab:PlainNetModel}
\end{table*}

\subsection{Architectures}
Since we break SDL into two independent linear layer and RNN parts, RNNs can be flexibly inserted into any nonlinear layer of a deep neural network. After choosing convolutional linear layers, we can construct two different architectures when inserting RNN into Plain Networks and residual blocks.
%As shown in Fig.~\ref{fig:PlainQmetric},

One is to replace all the RELU activation layers in PlainNet with Q-Metric ReLU, leading to DeTraMe-PlainNet.
Another is to replace the RELU layer inside the block in ResNet by Q-Metric ReLU, giving rise to DeTraMe-ResNet. When replacing all the RELU layers, DeTraMe-PlainNet becomes equivalent to DDL as explained in Section \ref{sec:algorithm}. When only replacing a single RELU layer in the ResNet architecture, a new DeTraMe-ResNet structure is built. The detailed architectures  are illustrated in the Appendix \ref{sec:appendix-B}.
%is equivalent to the deep pair dictionary learning, \textit{i.e.}, for each block, it is pair of SDL and ADL.

% \begin{figure}[htb]
%     \centering
%     \includegraphics[width=0.6\columnwidth]{Q-Achi.jpg}
%     \caption{Architectures of PlainNet vs. DeTraMe-PlainNet and ResNet vs. DeTraMe-ResNet.}
%     \label{fig:PlainQmetric}
% \end{figure}

For the PlainNet, we use a 9 layer architecture similar
to ALL-CNN \cite{ALLCNN} with dropouts, as listed in Table \ref{tab:PlainNetModel}. For the ResNet architecture, we follow the setting in \cite{resnet}, the first layer is a $3 \times 3$ convolutional layer with 16 filters. 3 residual blocks with output map size of 32, 16, and 8 are then used with 16, 32 and 64 filters for each block. The network ends up with a global average pooling and a fully-connected layer.
The parameters listed in Table \ref{tab:resnet}
are respectively chosen equal to $n=1,3,9,18,27$ for ResNet 8, 20, 56, 110 and 164-layer networks, and we respectively use $n=2, q=4$ and $n=2, q=8$ for WideResNet 16-4 and WideResNet 16-8 networks as suggested in \cite{WRN}.
% \begin{figure}[htb]
%     \centering
%     \includegraphics[width=\columnwidth]{ResnetQmetric.jpg}
%     \caption{ResNet vs. DeTraMe-Net Architecture}
%     \label{fig:ResnetQmetric}
% \end{figure}

\begin{table}[htb]
    \centering
    %\small{
    \begin{tabular}{c|c|c|c}
    \hline
    \hline
         output map size& $32 \times 32$ & $16 \times 16$ & $8 \times 8$  \\
         \hline
         \# layers & $1+2n$ & $2n$ &  $2n$\\
         \#filters & $16$ &  $32$ & $64$\\
         \hline
         WideResNet \#filters  & $16 \times q$ &  $32  \times q$ & $64 \times q$\\
         \hline
    \hline
    \end{tabular}
    %}
    \caption{ResNet Model \cite{resnet}}
    \label{tab:resnet}
\end{table}

For DeTraMe-Net,
we use convolutional RNNs having the same filter size (resp. number of channels) as those in the convolutional layer before.
%For purposed DeTraMe-Net, the convolutional operator is chosen for $\mathbf{W_{prox}}$, the filter size is setting to the same size of the one before the replaced RELU layer, and the filter number is the same as the input channels, such as, the DeTraMe-Net 3-layer listed in Table \ref{tab:PlainNetModel}. For each ResNet block, the DeTraMe-Net always use $3 \times 3$ convolutional filter.
The number of parameters of each model as well as the number of iterations performed in RNNs, are indicated in Table \ref{tab:results}.

\subsection{Datasets and Training Settings}
\text{\textbf{CIFAR10}} \cite{CIFAR}
% CIFAR10  dataset
contains 60,000 $32 \times 32$ color images divided into 10 classes. 50,000 images are used for training and 10,000 images  for testing.
\textbf{CIFAR100} \cite{CIFAR} is also constituted of $32\times32$ color images. However, it includes 100 classes with 50,000 images for training and 10,000  images for testing.
\textbf{SVHN} \cite{SVHN}
%is coming from a real world problem, which is called as the Street View House Numbers. It
contains 630,420 color images with size $32 \times 32$. 604,388 images are used for training and 26,032 images are used for testing.

For CIFAR datasets, the normalized input image is $32 \times 32$ randomly cropped after $4\times 4$ padding on each sides
of the image and random flipping, similarly to \cite{resnet,WRN}. No other data augmentation is used. For SVHN, we normalize the range of the images between 0 and 1. All the models are trained on an Nvidia V100 32Gb GPU with 128 mini-batch size.
% The models of Plain Networks are trained with SGD optimizer and momentum of 0.9, whose weight decay is 0.0001. The learning rate starts with 0.1, and is respectively reduced with a 0.1 decay at 60-th, 120-th, 160-th, 200-th epoch. 250 epochs in total are used to train.
The models of both PlainNet and ResNet architectures are trained by SGD optimizer with momentum equal to 0.9 and a weight decay of $5\times 10^{-4}$. On CIFAR datasets, the algorithm starts with a learning rate of 0.1. 200 epochs are used to train the models, and the learning rate is reduced by 0.2 at the 60-th, 120-th, 160-th and 200-th epochs. On SVHN dataset, a learning rate of 0.01 is used at the beginning and is then divided by 10 at the 80-th and 120-th epochs within a total of 160 epochs. The same settings are used as in \cite{WRN}.

\subsection{Results}
\subsubsection{DeTraMe-Net vs. DDL}
%Since our method is transformed and equivalent to DDL, which is formulated as eq. (\ref{equ:ddl-1}) - (\ref{equ:psi}), we
First, we compare our results with those achieved by the DDL approach in \cite{mahdizadehaghdam2019deep}, as both DeTraMe-Net and DDL with 9-layer follow the ALL-CNN architecture in \cite{ALLCNN}.
\begin{table}[!htb]
    \centering
    \resizebox{0.95\columnwidth}{!}{
    \begin{tabular}{|c|c|c|c|}
    \hline
         Model & \# Parameters &CIFAR10 &CIFAR100  \\
    \hline
        PlainNet 9-layer \cite{ALLCNN} &1.4M  &90.31\% $\pm$ 0.31\% &66.15\% $\pm$ 0.61\% \\
         DDL 9 \cite{mahdizadehaghdam2019deep}&1.4M& 93.04\%$^*$ & 68.76\%$^*$ \\
        %  &&&\\
        \hline
        % \hline
         DeTraMe-Net 9&3.0M&\textbf{93.05\%} $\pm$ 0.46\%  &\textbf{69.68\%} $\pm$ 0.50\%  \\
         DeTraMe-Net 9 (Best)&3.0M&\textbf{93.40\%} & \textbf{70.34\%}\\
    \hline
    \end{tabular}
    }
    \caption{
    {Accuracy: DeTraMe-Net vs. DDL: the architectures are listed in the fourth column in Table \ref{tab:PlainNetModel}. The number with '*' was reported in the original paper.
    }}
    \label{tab:ddlcompare}
\end{table}

\begin{table*}[!htb]
    \centering
    %\small{
    % \resizebox{\textwidth}{!}{
    \begin{tabular}{c|c|c|c|c|c|c}
    \hline
    \hline
    Accuracy (\%)  &\multicolumn{2}{|c|}{CIFAR10 +} &\multicolumn{2}{|c}{CIFAR100 +}&\multicolumn{2}{|c}{SVHN}\\
    \hline
         Network Architectures   & Original  & DeTraMe-Net & Original  & DeTraMe-Net & Original  & DeTraMe-Net\\
         &  & (\#iteration) &    & (\#iteration) &    & (\#iteration) \\
         \hline
         PlainNet 3-layer & 35.14 $\pm$ 4.94 & \textbf{88.51} $\pm$ 0.17 (5)&22.01 $\pm$ 1.24 & \textbf{64.99} $\pm$ 0.34 (3)&45.64&\textbf{97.21} (8)\\
         PlainNet 6-layer& 86.71 $\pm$ 0.36 & \textbf{92.24} $\pm$ 0.32 (2)&62.81 $\pm$ 0.75&\textbf{69.49} $\pm$ 0.61 (2)&97.55&\textbf{98.17} (5)\\
         PlainNet 9-layer  &90.31 $\pm$ 0.31 & \textbf{93.05} $\pm$ 0.46 (2)&66.15 $\pm$ 0.61 &\textbf{69.68} $\pm$ 0.50 (2)& 97.98&\textbf{98.26} (5)\\
         PlainNet 12-layer &91.28 $\pm$ 0.27 & \textbf{92.03} $\pm$ 0.54 (2)&68.70 $\pm$ 0.65 &\textbf{70.92} $\pm$ 0.78 (2)&98.14&\textbf{98.27} (3)\\
         \hline
         ResNet 8 &87.36 $\pm$ 0.34 & \textbf{89.13} $\pm$ 0.23 (3)&60.38 $\pm$ 0.49 &\textbf{64.50} $\pm$ 0.54 (2)&96.70&\textbf{97.50} (3)\\
         ResNet 20 &92.17 $\pm$ 0.15 & \textbf{92.19} $\pm$ 0.30 (3)&68.42 $\pm$ 0.29 &\textbf{68.62} $\pm$ 0.27 (2)&97.70&\textbf{97.82} (2)\\
         ResNet 56 &93.48 $\pm$ 0.16 & \textbf{93.54} $\pm$ 0.30 (3)&\textbf{71.52} $\pm$ 0.34 &\textbf{71.52} $\pm$ 0.44 (2)&97.96&\textbf{98.04} (2)\\
         ResNet 110 &93.57 $\pm$ 0.14& \textbf{93.68} $\pm$ 0.32 (2)&72.99 $\pm$ 0.43 &\textbf{73.05} $\pm$ 0.40 (2)&-&-\\
        %  ResNet 164 &2.590M & 2.738M& 0.9359 & \textbf{0.9439 }(2)& 0.7357&  \textbf{0.7441} (2)&-&-\\
         \hline
         WideResNet 16-4 & \textbf{95.18} $\pm$ 0.10 & \textbf{95.18} $\pm$ 0.13 (2)& 76.72 $\pm$ 0.13& \textbf{76.85} $\pm$ 0.48 (3) &98.06&\textbf{98.16} (3)\\
         WideResNet 16-8 & 95.62 $\pm$ 0.12 & \textbf{95.66} $\pm$ 0.22 (2)& 79.55 $\pm$ 0.12 &\textbf{79.69} $\pm$ 0.55 (3)&98.17&\textbf{98.23} (3)\\
         \hline
        % ResNeXt 29-8-64 &34.4M & - & 0.9589 & -& 0.8157& - &-&-\\
        \hline
    \end{tabular}
    %}
    \caption{
    {
    CIFAR10 and CIFAR100 with + is trained with simple translation and flipping data augmentation. All the presented results are re-implemented and run by using the same settings. SVHN is too large to train, so it is only run once for reference.
    }
    }
    \label{tab:results}
\end{table*}

\begin{table}[!htb]
    \centering
    % \small{
    % \resizebox{0.5\columnwidth}{!}{
    \begin{tabular}{|c|c|c|c|}
    \hline
         Model& \#Parameters  & Training (s)   &Testing (s)  \\
    \hline
         DDL \cite{mahdizadehaghdam2019deep}& {0.35 M}&  0.2784$^*$ & 9.40$\times 10^{{-2} ^*}$ \\
        %  &&&\\
        \hline
        % \hline
         DeTraMe-Net 12&\textbf{2.4 M}&\textbf{0.1605} &\textbf{3.52}$\times 10^{-4} $  \\

    \hline
    \end{tabular}
    % }
    % }
    \caption{
    {Time Complexity: DeTraMe-Net vs. DDL: The number with $'*'$ was averaged based on the reported one in the original paper.
    }}
    \label{tab:ddlTimecompare1}
\end{table}

Since we break the dictionary and its pseudo inverse into two independent variables, a higher number of parameters is involved in DeTraMe-Net than in \cite{mahdizadehaghdam2019deep}. However, DeTraMe-Net presents two main advantages:

%: One is more freedom and capability to discriminate, and another one is a more efficient training algorithm.
% \begin{itemize}
\textbf{(1)} The first one is a better capability to
discriminate: in comparsion to DDL in Table \ref{tab:ddlcompare}, the best DeTraMe-Net accuracy respectively achieves $0.36\%$ and $1.58\%$ improvements on CIFAR10 and CIFAR100 datasets and, in terms of averaged performance, $0.01\%$ and $0.92\%$ accuracy improvements are respectively obtained on these two datasets.

\textbf{(2)} The second advantage is that DeTraMe-Net is implemented in a network framework, with no need for extra functions to compute gradients at each layer, which greatly reduces the time costs. As shown in Table~\ref{tab:ddlTimecompare1}, DDL \cite{mahdizadehaghdam2019deep} processes a 28 $\times $ 28 image with 0.35M parameters in 0.2784 second for training and 9.4$\times 10^{-2}$ s for testing, while the proposed DeTraMe-Net processes a 32 $\times $ 32 image with 2.4M parameters in 0.1605 second for training and 3.52$\times 10^{-4}$ s for testing. This shows that our method with 6 times more parameters than DDL only requires half training time and a faster testing time by a factor 100. Moreover, by taking advantage of the developed implementation frameworks for neural networks, DeTraMe-Net can use up to 110 layers, while the maximum number of layers in \cite{mahdizadehaghdam2019deep} is 23.
\subsubsection{DeTraMe-Net vs. Generic CNNs}

We next compare DeTraMe-Net with generic CNNs with respect to three different aspects: \underline{Accuracy}, \underline{Parameter$\phantom{y}$number}, \underline{Capacity},  \underline{Adversarial robustness and$\phantom{y}$robustness to random noise} and \underline{Time$\phantom{y}$complexity}.
%As shown in Figure \ref{fig:improvement},

\textbf{Accuracy.} As shown in Table \ref{tab:results}, %by combining with the Q-Metric Learning, the RNN part, our DeTraMe-Net achieves consistent improvements for different models with different layes of both PlainNet and ResNet architectures over CIFAR10, CIFAR100 and SVHN datasets.
with the same architecture, using DeTraMe-Net structures achieves an overall better performance than all various generic CNN models do. For PlainNet architecture, DeTraMe-Net increases the accuracy with a median of $3.99\%$ on CIFAR10, $5.11\%$ on CIFAR100 and $0.45\%$ on SVHN, and respectively increases the accuracy of at least $0.75\%,~2.22\%,~0.13\%$ on theses three datasets. For ResNet architecture, DeTraMe-Net also consistently increases the accuracy with a median of $0.05\%$ on CIFAR10, $0.13\%$ on CIFAR100 and $0.10\%$ on SVHN.
%and at least $0.02\%,~0.05\%,~0.06\%$ on all datasets.

\begin{figure}[!htb]
     \centering
     \includegraphics[width=0.9\columnwidth]{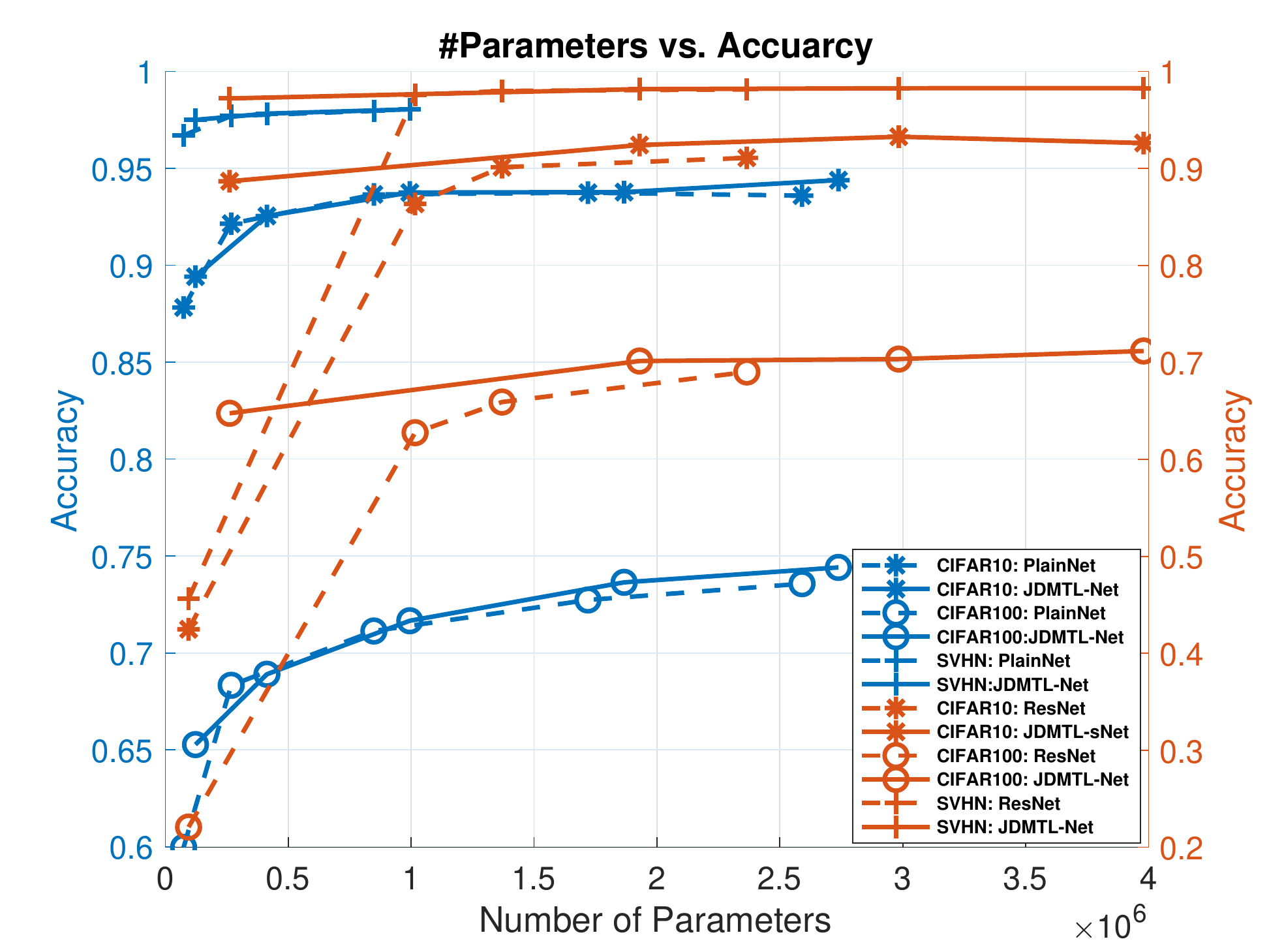}
     \caption{
     {Classification accuracy versus number of parameters. The blue color curves are based on ResNet architecture (\textit{left axis}), while the orange curves are based on PlainNet architecture (\textit{right axis}). The solid line denotes DeTraMe-Net, while the dash-line denotes the original CNNs. '*' denotes for CIFAR10, 'o' denotes for CIFAR100 and '+' denotes for SVHN.}}
     \label{fig:accvsparams}
 \end{figure}
\textbf{Parameter number.} Although, for a given architecture, DeTraMe-Net improves the accuracy, it involves more parameters. However, as demonstrated in Figure \ref{fig:accvsparams},
for a given number of parameters, DeTraMe-Net outperforms the original CNNs over all three datasets. Plots corresponding to DeTraMe-Net for both PlainNet and ResNet architectures are indeed above those associated with standard CNNs.

\begin{table*}[!htb]
    \centering
    % \resizebox{1\columnwidth}{!}{
    \begin{tabular}{c|c|c|c}
    \hline
    \hline
        Model & CIFAR10+ ($\xi= 5e-2$)& CIFAR100+ ($\xi= 5e-2$)&SVHN($\xi= 2e-4$) \\
    % \hline
        % $\xi$ & 0.05 & 0.05 & 0.0001\\
    \hline
        Original WideResNet-16-8 &$18.03\% \pm 1.75\%$&$59.61\% \pm 2.72\%$&46.84\% $\pm$ 3.05\%\\
        DeTraMe WideResNet-16-8 &\textbf{6.78\%} $\pm$ 0.73\% &\textbf{23.65\%} $\pm$ 1.84\%& \textbf{23.61\%} $\pm$ 5.61\%\\
    \hline
    \hline
    \end{tabular}
    % }
     \caption{
     \small{Fooling Rate versus Adversarial Attack. $\xi$ in \cite{UAP} controls the attack magnitude.}}
    \label{tab:UAP}
\end{table*}

\textbf{Capacity.}
In terms of \textit{depth}, comparing improvements with PlainNet and ResNet,
shows that the shallower the network, the more accurate. It is remarkable that DeTraMe-Net leads to more than $42\%$ accuracy increase for PlainNet 3-layer
%and $0.8\%$ for ResNet 8
on CIFAR10, CIFAR100 and SVHN datasets.
When the networks become deeper, they better capture discriminative features of the classes,
and albeit with smaller gains,
%although gains are smaller,
DeTraMe-Net still achieves a better accuracy than a generic deep CNN, e.g. around $0.11\%$ and $0.05\%$  higher than ResNet 110 on CIFAR10 and CIFAR100.
In terms of \textit{width}, we use WideResNet-16-4 and WideResNet-16-8 as two reference models, since both of them include 16 layers but have different widths. Table \ref{tab:results} shows that
increasing width is beneficial to DeTraMe-Net. Since the original models have already achieved excellent performance for CIFAR10, CIFAR100 and SVHN, DeTraMe-Nets with various widths show similarly slightly improved accuracies. However, the experiments still demonstrate that enlarging the width for DeTraMe-Net leads to an increase in the accuracy gain.
%from $0.82\%$ to $0.13\%$.
%, which shows DeTraMe-Net with more width can
%still bring more capacity to improve the accuracy.
%\text{[\underline{\textit{Architecture}}]} Since all RELU active layers are replaced by Q-Metric in a PlainNet, which is equivalent to a deep SDL, while only the RELU layer inside each ResNet block is replaced by Q-Metric, which can be viewed as a deep pair of SDL and ADLs, DeTraMe-PlainNet obtains remarkable improvements than DeTraMe-ResNet, as more SDL layers are involved.

\textbf{Adversarial robustness and robustness to random noise.} The UAP tool \cite{UAP} is used to adversarially attack the best performance models of DeTraMe-Net and original CNN over 3 datasets. As shown in Table \ref{tab:UAP}, the fooling rate of DeTraMe-Net is greatly reduced by more than half compared to the original CNN one. Moreover, by attacking PlainNet, Fig. \ref{fig:advattack} shows that while increasing the adversarial attack magnitudes, our DeTraMe-PlainNet has a performance similar to PlainNet architecture in terms of fooling rate. While in comparing with the ResNet architecture, DeTraMe-ResNet greatly reduces the fooling rate, probably by taking advantage of the firmly nonexpansiveness properties of the proximal operator in the $Q$-metric. However, the robustness of residual networks in the presence of adversarial noise, remains theoretically an open issue, and hence deserves additional future investigation.
 \begin{figure}[!htb]
     \centering
     \includegraphics[width=1\columnwidth]{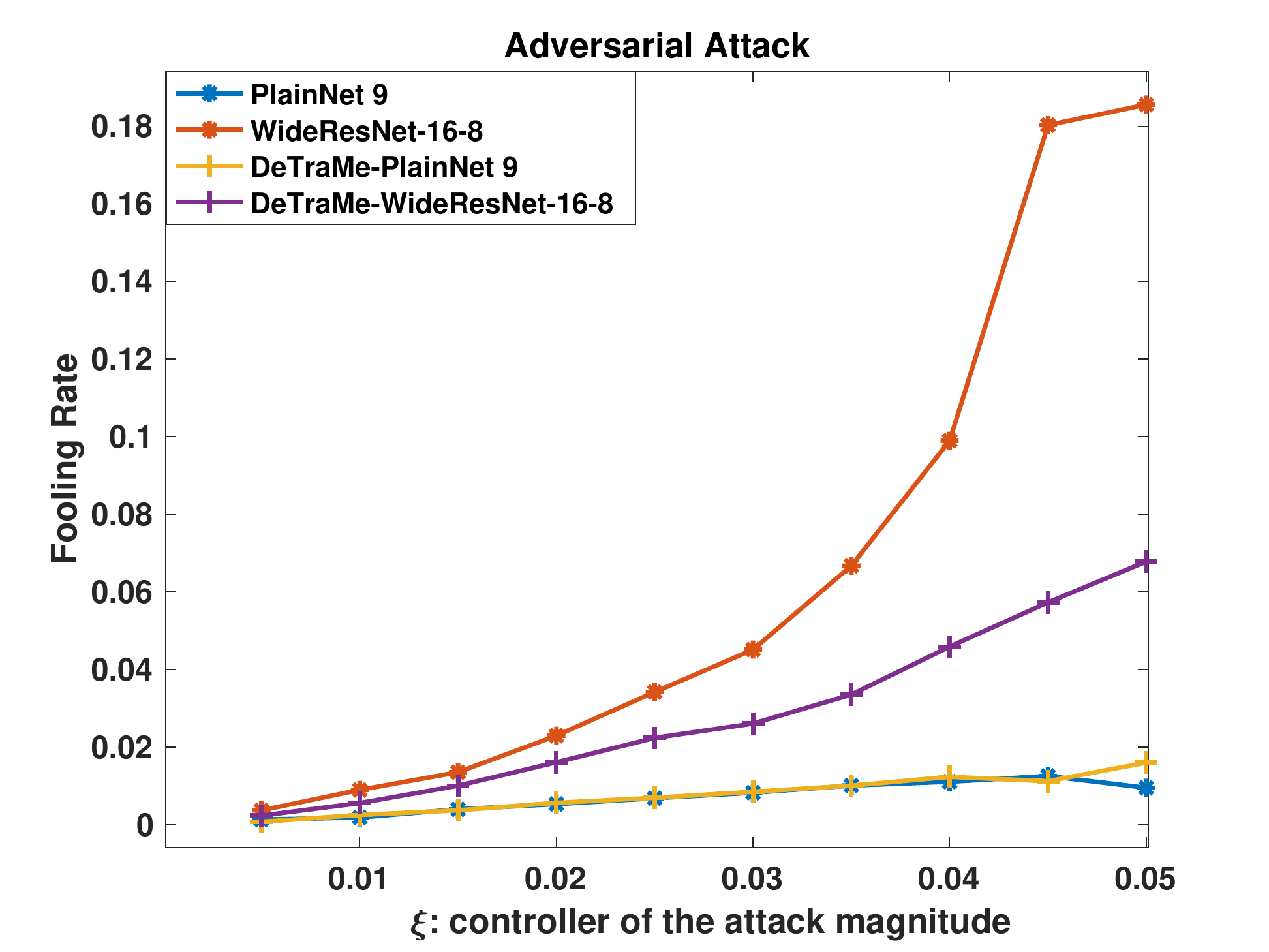}
     \caption{The fooling rate is averaged over 5 runs of CIFAR10 dataset. }
     \label{fig:advattack}
 \end{figure}

Concerning the robustness to random noise, we randomly generate a zero-mean Gaussian noise $\mathbf{v}$ and add it to the input data, where $\operatorname{E}(\|\mathbf{v}\|_2^2)=\rho\|\mathbf{x}\|_2^2$, $\rho$ controls the magnitude of random noise level with respect to the average image energy.
\begin{figure}[!htb]
     \centering
     \includegraphics[width=1\columnwidth]{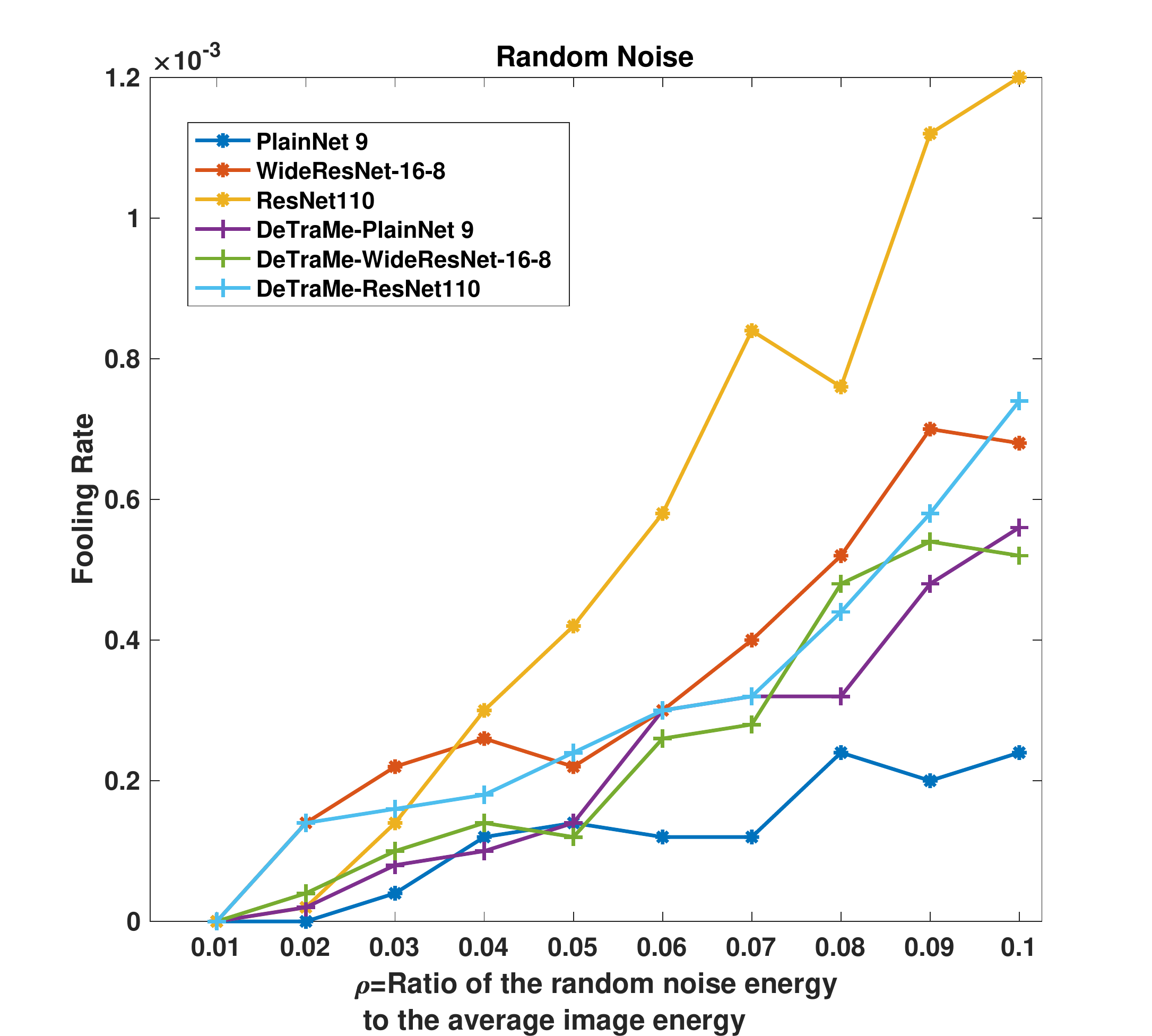}
     \caption{The fooling rate is averaged over 5 runs of CIFAR10 dataset. }
     \label{fig:randomnoise}
 \end{figure}

As shown in Fig. \ref{fig:randomnoise}, ResNet110 incurs a highest fooling rate as the noise is amplified by propagating deeply. However, WideResNet-16-8 incurs the second highest fooling rate, while our DeTraMe-ResNet110 and DeTraMe-WideResNet-16-8 achieve better performances. DeTraMe-PlainNet9 reaches a higher fooling rate than the original PlainNet, but it should be noticed that the magnitude of the fooling rate is very small, and our accuracy is about $3\%$ higher than the one of the original PlainNet CNN.

\begin{table}[!htb]
    \centering
    %\small{
    \resizebox{\columnwidth}{!}{
    \begin{tabular}{c|c|c|c|c}
    \hline
    \hline
        &\multicolumn{2}{c|}{Training Time (s)} & \multicolumn{2}{c}{Testing Time ($\times 10^{-4}$ s)}  \\
    \hline
      &\multicolumn{2}{|c}{CIFAR10 +}  &\multicolumn{2}{|c}{CIFAR10 +} \\
    \hline
         Network Architectures   & Original  & DeTraMe & Original  & DeTraMe \\
        %   &     & (\#iteration) &    & (\#iteration)  \\
         \hline
         PlainNet 3-layer & 2964.45& 5837.72 & 1.93&2.50\\
         PlainNet 6-layer& 3762.75 & 6706.76 & 1.94 &2.94 \\
         PlainNet 9-layer  &3948.24 & 7451.42 & 2.01 &3.24\\
         PlainNet 12-layer &4087.98 & 8023.02& 2.12	&3.52\\
         \hline
         ResNet 8 &3152.10 &3962.09 & 1.76 & 2.10 \\
         ResNet 20 &3840.02&5316.47 & 1.90 &2.21\\
         ResNet 56 &6411.03	&8752.61   & 2.27&3.37 \\
         ResNet 110 &7709.55&12997.66  & 3.10&4.53 \\
        %  ResNet 164 &2.590M & 2.738M& 0.9359 & \textbf{0.9439 }(2)& 0.7357&  \textbf{0.7441} (2)&-&-\\
         \hline
         WideResNet 16-4 & 4562.02&6425.65 &2.41&2.84 \\
         WideResNet 16-8 & 7104.62 &11897.93 &3.26&5.15\\
         \hline
        % ResNeXt 29-8-64 &34.4M & - & 0.9589 & -& 0.8157& - &-&-\\
        \hline
    \end{tabular}
    }
    \caption{
    {
    CIFAR10 with + is trained with simple translation and flipping data augmentation. All the presented results are re-implemented and run by using the same settings.
    }
    }
    \label{tab:results1}
\end{table}

\textbf{Time complexity.} Based on the running times in Table \ref{tab:results1},
training takes almost twice as much time than for generic CNNs. This appears consistent with the fact that the number of parameters of DeTraMe-Net is twice as many than for generic CNNs. However, it is worth noting that the training can be performed off-line and
that testing can still be completed in real time,
with only a slight increase of the testing time with respect to a standard CNN, that is 100 times faster than a conventional DDL method (as shown in Table \ref{tab:ddlTimecompare1}).

\section{Conclusion}
\label{sec:conclusion}
Starting from a DDL formulation, we have shown that it
%can be interpreted as the introduction of metrics within standard activation operators, thus giving rise to a novel Joint Deep and Transform Learning problem.
is possible to reformulate the problem in a standard optimization problem with the introduction of metrics within standard activation operators. This yields a novel Deep Transform and Metric Learning problem.
This has allowed us to show that the original DDL can be performed thanks to a network mixing linear layer and RNN algorithmic structures,
thus leading to a fast and flexible network framework for building efficient DDL-based classifiers with a higher discriminiative ability.
Our experiments show that the resulting DeTraMe-Net performs better than the original DDL approach
and state-of-the-art generic CNNs. %Moreover, the improvement capacity of different DeTraMe-Net is also discussed over the evaluation results.
We think that the bridge we established between DDL and DNN will help in further understanding and controlling these powerful tools so as to
% %get the best performance out of them.
attain better performance and properties.
It would also be interesting to explore other image processing applications and understand the scope of the proposed approach.

% if have a single appendix:
%\appendix[Proof of the Zonklar Equations]
% or
%\appendix  % for no appendix heading
% do not use \section anymore after \appendix, only \section*
% is possibly needed

% use appendices with more than one appendix
% then use \section to start each appendix
% you must declare a \section before using any
% \subsection or using \label (\appendices by itself
% starts a section numbered zero.)
%

\appendices
\section{Alternative Derivation of Algorithm 1} \label{sec:appendix-A}
We have presented in our paper a simple approach for
deriving the recursive model:
\begin{equation}\label{equ:u-update-supp}
%\resizebox{\columnwidth}{!}{
\Ub_{t+1}=
  \operatorname{ReLU}\big((\mathbf{h} \mathbf{1}^\top)\odot \Zb+\widetilde{\Wb}(\Ub_t-\Zb)-\mathbf{b} \mathbf{1}^\top\big),
 % }
\end{equation}
in order to compute
\begin{equation}\label{equ:proximal-operator-supp}
    \operatorname{prox}^{\Qb}_{\lambda \psi}(\Zb)=\argmind{\Ub \in \mathbb{R}^{k\times N}} \frac{
    1}{2}\|\Ub-\Zb\|_{F,\Qb}^2+\lambda \psi(\Ub).
\end{equation}
We propose an alternative approach which is based on the classical forward-backward algorithm for solving the nonsmooth convex optimization problem in \eqref{equ:proximal-operator-supp}. The $t$-th iteration of the preconditioned form of this algorithm
reads
\begin{equation}\label{e:FBprecond0}
\Ub_{t+1}=
\operatorname{prox}_{\gamma \lambda \psi}^{\boldsymbol \Theta}
(\Ub_t-\gamma {\boldsymbol \Theta}^{-1}\Qb(\Ub_t-\Zb))
\end{equation}
where $\gamma$ is a positive stepsize and ${\boldsymbol\Theta}$
is a preconditioning symmetric definite positive matrix, and $\Ub_0 \in \mathbb{R}^{k\times N}$. The algorithm is guaranteed to converge to the solution to \eqref{equ:proximal-operator-supp} provided that
\begin{equation}
    \gamma < \frac{2}{\|{\boldsymbol \Theta}^{-1/2} \Qb {\boldsymbol \Theta}^{1/2}\|_{\rm S}},
\end{equation}
where $\|\cdot\|_{\rm S}$ denotes the spectral norm.
Eq. \eqref{e:FBprecond0} can be reexpressed as
\begin{equation}\label{e:FBprecond}
\Ub_{t+1}=
\operatorname{prox}_{\gamma \lambda \psi}^{\boldsymbol \Theta}
\big((\mathbf{I}-\gamma {\boldsymbol \Theta}^{-1}\Qb)
(\Ub_t-\Zb)+\Zb\big).
\end{equation}
Assume now that ${\boldsymbol \Theta}$ is a diagonal matrix
$\operatorname{Diag}(\theta_1,\ldots,\theta_k)$ where, for every
$i\in\{1,\ldots,k\}$, $\theta_i > 0$.
When the sparsity promoting penalization is chosen equal to
\begin{equation}
\psi=\|\cdot\|_1+\iota_{[0,+\infty)^{k\times N}}+\frac{\beta}{2\lambda}\|\cdot\|_{F}^2,
\end{equation}
the proximity operator involved in \eqref{e:FBprecond}
simplifies as
\begin{multline}\label{e:proxvect}
\forall \Ub = (u_{i,j})_{1\le i \le k,1 \le j \le N} \in \mathbb{R}^{k\times N}, \\
\operatorname{prox}_{\gamma \lambda \psi}^{\boldsymbol \Theta}
= \big(\operatorname{prox}_{\gamma \lambda \theta_i^{-1} \rho}(u_{i,j})\big)_{1\le i \le k,1 \le j \le N}
\end{multline}
where $\rho=\lambda |\cdot|+\iota_{[0,+\infty)}+\frac{\beta}{2} (\cdot)^2$.
In addition, for every $u\in \mathbb{R}$ and
$i\in \{1,\ldots,k\}$,
\begin{equation}
\operatorname{prox}_{\gamma \lambda \theta_i^{-1} \rho}(u)
= \argmind{v\in [0,+\infty)}
\frac{\theta_i}{2} (v-u)^2+
\gamma \big(\lambda |v|+\frac{\beta}{2}v^2\big).
\end{equation}
After some simple algebra, this leads to
\begin{equation}\label{e:ReLUprox}
\operatorname{prox}_{\gamma \lambda \theta_i^{-1} \rho}(u)
= \operatorname{ReLU}\Big(\frac{\theta_i}{\theta_i+\gamma \beta} u-
\frac{\gamma\lambda}{\theta_i+\gamma \beta}\Big).
\end{equation}
Altogether \eqref{e:FBprecond}, \eqref{e:proxvect}, and
\eqref{e:ReLUprox} allow us to recover an update equation
of the form \eqref{equ:u-update},
where
\begin{equation}\label{e:reparam-supp}
\begin{split}
    &\widetilde{\mathbf{W}}= ({\boldsymbol \Theta}+\gamma \beta \mathbf{I})^{-1}({\boldsymbol \Theta}-\gamma \Qb),\\
%    \left(\frac{q_{i,\ell}}{q_{i,i}+\beta}\delta_{i-\ell}\right)_{1\leq i,\ell \leq k},\\
    &\mathbf{h}=\left(\frac{\theta_i}{\theta_i+\gamma \beta}\right)_{1\leq i \leq k},\\
    &\mathbf{b}=\left(\frac{\gamma \lambda}{\theta_{i}+\gamma\beta}\right)_{1\leq i \leq k}.
    % & \mathbf{1}=[1,\ldots,1]^\top\in \mathbb{R}^N,\\
\end{split}
\end{equation}
Note that, if $\gamma =1$ and, for every $i\in \{1,\ldots,k\}$,
$\theta_i= q_{i,i}$, $\widetilde{\mathbf{W}}$ is a matrix with zeros on its main diagonal.

\section{Illustration of DeTraMe-Net Architectures}\label{sec:appendix-B}
\subsection{DeTraMe-PlainNet}
To replace all the RELU activation layers in PlainNet with Q-Metric ReLU leads to DeTraMe-PlainNet. Since all the RELU layers are replaced by Q-Metric ReLu, DeTraMe-PlainNet becomes equivalent to DDL.
\begin{figure}[!htb]
    \centering
    \includegraphics[width=0.5\columnwidth]{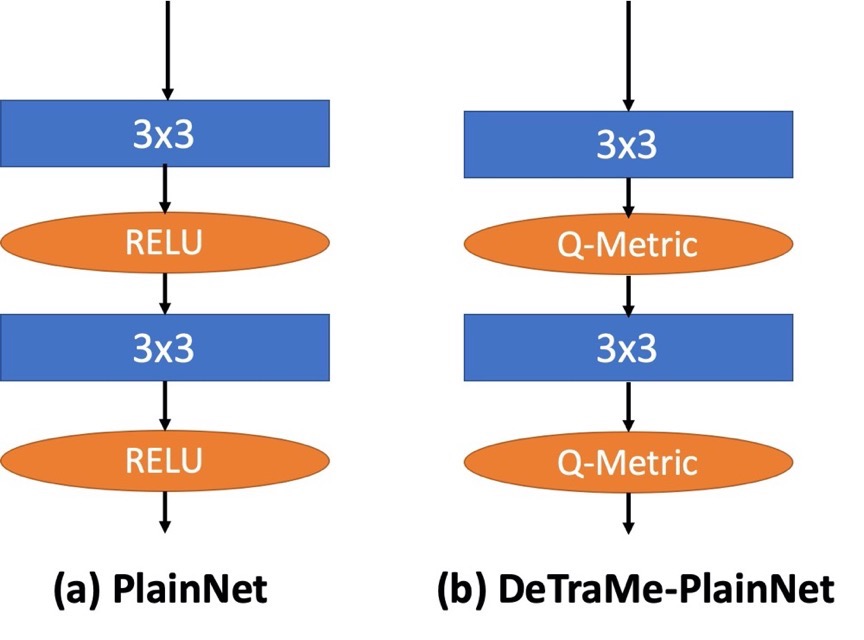}
    \caption{Architectures of PlainNet vs. DeTraMe-PlainNet}
    \label{fig:PlainQmetric}
\end{figure}

\subsection{DeTraMe-ResNet}
Replacing the RELU layer inside the block in ResNet by Q-Metric ReLU, allows us to build a new structure called DeTraMe-ResNet.
\begin{figure}[!htb]
    \centering
    \includegraphics[width=0.5\columnwidth]{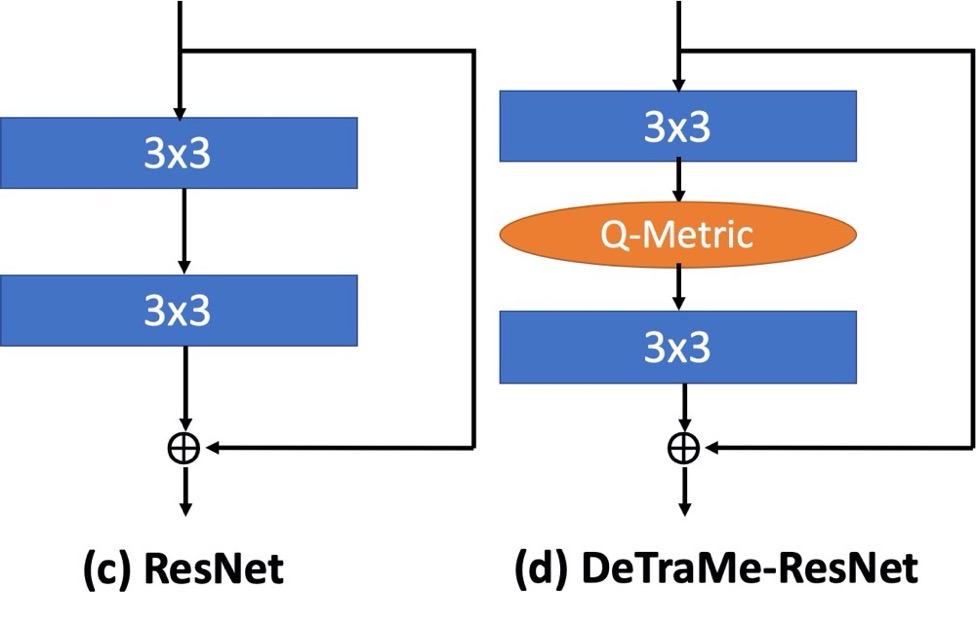}
    \caption{Architectures of ResNet vs. DeTraMe-ResNet}
    \label{fig:ResQmetric}
\end{figure}

In our experiments, for ResNet architecture, the RNN part accounting for Q-Metric learning, makes use of $3 \times 3$ filters.
% you can choose not to have a title for an appendix
% if you want by leaving the argument blank
% \section{}
% Appendix two text goes here.

% % use section* for acknowledgment
% \ifCLASSOPTIONcompsoc
%   % The Computer Society usually uses the plural form
%   \section*{Acknowledgments}
% \else
%   % regular IEEE prefers the singular form
%   \section*{Acknowledgment}
% \fi

% The authors would like to thank...

% Can use something like this to put references on a page
% by themselves when using endfloat and the captionsoff option.
\ifCLASSOPTIONcaptionsoff
  \newpage
\fi

% trigger a \newpage just before the given reference
% number - used to balance the columns on the last page
% adjust value as needed - may need to be readjusted if
% the document is modified later
%\IEEEtriggeratref{8}
% The "triggered" command can be changed if desired:
%\IEEEtriggercmd{\enlargethispage{-5in}}

% references section

% can use a bibliography generated by BibTeX as a .bbl file
% BibTeX documentation can be easily obtained at:
% http://mirror.ctan.org/biblio/bibtex/contrib/doc/
% The IEEEtran BibTeX style support page is at:
% http://www.michaelshell.org/tex/ieeetran/bibtex/
%\bibliographystyle{IEEEtran}
% argument is your BibTeX string definitions and bibliography database(s)
%\bibliography{IEEEabrv,../bib/paper}
%
% <OR> manually copy in the resultant .bbl file
% set second argument of \begin to the number of references
% (used to reserve space for the reference number labels box)
% \begin{thebibliography}{1}

% \bibitem{IEEEhowto:kopka}
% H.~Kopka and P.~W. Daly, \emph{A Guide to \LaTeX}, 3rd~ed.\hskip 1em plus
%   0.5em minus 0.4em\relax Harlow, England: Addison-Wesley, 1999.

% \end{thebibliography}
\bibliographystyle{IEEEtran}
% argument is your BibTeX string definitions and bibliography database(s)
\bibliography{egbib}

% biography section
%
% If you have an EPS/PDF photo (graphicx package needed) extra braces are
% needed around the contents of the optional argument to biography to prevent
% the LaTeX parser from getting confused when it sees the complicated
% \includegraphics command within an optional argument. (You could create
% your own custom macro containing the \includegraphics command to make things
% simpler here.)
%\begin{IEEEbiography}[{\includegraphics[width=1in,height=1.25in,clip,keepaspectratio]{mshell}}]{Michael Shell}
% or if you just want to reserve a space for a photo:

% \begin{IEEEbiography}{Michael Shell}
% Biography text here.
% \end{IEEEbiography}

% % if you will not have a photo at all:
% \begin{IEEEbiographynophoto}{John Doe}
% Biography text here.
% \end{IEEEbiographynophoto}

% % insert where needed to balance the two columns on the last page with
% % biographies
% %\newpage

% \begin{IEEEbiographynophoto}{Jane Doe}
% Biography text here.
% \end{IEEEbiographynophoto}

% You can push biographies down or up by placing
% a \vfill before or after them. The appropriate
% use of \vfill depends on what kind of text is
% on the last page and whether or not the columns
% are being equalized.

%\vfill

% Can be used to pull up biographies so that the bottom of the last one
% is flush with the other column.
%\enlargethispage{-5in}

% that's all folks
\end{document}